
\typeout{IJCAI--ECAI 26 Instructions for Authors}


\documentclass{article}
\pdfpagewidth=8.5in
\pdfpageheight=11in

\usepackage{ijcai26}

\usepackage{times}
\usepackage{soul}
\usepackage{url}
\usepackage[hidelinks]{hyperref}
\usepackage[utf8]{inputenc}
\usepackage[small]{caption}
\usepackage{graphicx}
\usepackage{amsmath}
\usepackage{amsthm}
\usepackage{booktabs}
\usepackage{algorithm}
\usepackage{algorithmic}

\usepackage{subfigure}
\usepackage{makecell}
\usepackage{amssymb}
\usepackage{thm-restate}
\usepackage{enumerate}
\usepackage{xcolor}
\newtheorem{definition}{Definition}

\newcommand{\cV}{\mathcal{V}}

\newcommand{\sd}[1]{\ensuremath{_{\pm #1}}}
\newcommand{\best}[1]{\textbf{#1}}
\newcommand{\secondbest}[1]{\underline{#1}}

\newcommand{\SB}{{\sf Structural Balance}}
\newcommand{\SE}{{$Structure$ $Enhancement$}}
\newcommand{\RD}{{$Relation$ $Dif\!fusion$}}





\urlstyle{same}



\usepackage{thm-restate}
\usepackage{multirow}





\pdfinfo{
/TemplateVersion (IJCAI.2026.0)
}

\title{GraphSB: Boosting Imbalanced Node Classification on Graphs through Structural Balance}

\author{
Zhixiao Wang$^1$
\and
Chaofan Zhu$^1$
\and
Qihan Feng$^1$
\and
Jian Zhang$^1$
\and
Xiaobin Rui$^1$\thanks{Corresponding author.}
\and
Philip Yu$^2$
\affiliations
$^1$China University of Mining and Technology\\
$^2$University of Illinois Chicago
\emails
zhxwang@cumt.edu.cn,
chaofanz@cumt.edu.cn,
fengqh@cumt.edu.cn,
zhangjian10231209@cumt.edu.cn,
ruixiaobin@cumt.edu.cn,
psyu@uic.edu
}

\begin{document}

\maketitle

\begin{abstract}
Imbalanced node classification is a critical challenge in graph learning, where most existing methods typically utilize Graph Neural Networks (GNNs) to learn node representations.
These methods can be broadly categorized into the data-level and the algorithm-level.
The former aims to synthesize minority-class nodes to mitigate quantity imbalance, while the latter tries to optimize the learning process to highlight minority classes.
However, neither of them addresses the inherently imbalanced graph structure, which is a fundamental factor that incurs majority-class dominance and minority-class assimilation in GNNs. 
Our theoretical analysis further supports this critical insight.
Therefore, we propose GraphSB (Graph Structural Balance), a novel framework that incorporates {\SB} as a key strategy to address the underlying imbalanced graph structure before node synthesis.
{\SB} performs a two-stage structure optimization: {\SE} that mines hard samples near decision boundaries through dual-view analysis and enhances connectivity for minority classes through adaptive augmentation, and {\RD} that propagates the enhanced minority context while simultaneously capturing higher-order structural dependencies.
Thus, GraphSB balances structural distribution before node synthesis, enabling more effective learning in GNNs.
Extensive experiments demonstrate that GraphSB significantly outperforms the state-of-the-art methods.
More importantly, the proposed {\SB} can be seamlessly integrated into state-of-the-art methods as a simple plug-and-play module, increasing their accuracy by an average of $4.57\%$.
\end{abstract}

\section{Introduction}\label{sec:intro}

In recent years, Graph Neural Networks (GNNs)~\cite{1} have become a powerful tool for graph representation learning, achieving advanced performance in various graph-related downstream tasks~\cite{2,3,4}. 
Among them, node classification~\cite{5,6} is one of the most fundamental tasks. 
The goal of node classification is to predict the labels of the unlabeled nodes based on the characteristics of the nodes.

In fact, real-world graphs often exhibit a significant class imbalance, with the number of samples varying greatly across different classes~\cite{9,liu2023survey}. 
Thus, the classification of unbalanced nodes has become a critical challenge. 
To address this problem, researchers have explored various GNN-based approaches, which can be broadly categorized into two types: data-level methods and algorithm-level methods.
Data-level methods aim to balance the training process by synthesizing additional samples for the minority class. 
To highlight minority-class nodes, algorithm-level methods attempt to optimize the learning process.
Despite their contributions, they are essentially palliative as they fail to directly tackle the underlying imbalanced structure.

In practice, the imbalanced graph structure usually exhibits strong asymmetry: minority-class nodes tend to have sparser neighborhoods.
This induces biased neighbor aggregation during message passing, severely degrading the representation quality of minority-class nodes.
Therefore, failing to address fundamental structural imbalance not only limits the benefits of quantity balancing techniques, but also perpetuates structural bias throughout the learning process, resulting in a final biased representation.

Figure~\ref{lizi} illustrates how structural imbalance hinders the learning of minority classes. 
Due to the scarcity of edges connecting minority-class nodes, these nodes receive insufficient same-class context during neighborhood aggregation. 
Consequently, their latent representations are progressively compressed toward the majority-class subspace. 
This compression effect is particularly harmful to the \textbf{hard samples} located near decision boundaries, {\em i.e.}, nodes that naturally lie between class regions and rely on stronger same-class support to maintain correct classification. 
Furthermore, as GNN depth increases, minority-class context decays exponentially, while majority-class context accumulates through repeated aggregation, further shrinking decision boundaries.
This phenomenon arises fundamentally from the intrinsic coupling between node embeddings and graph structure, highlighting the need for structural intervention at early stages.

\begin{figure}[!h]
\centering
\includegraphics[width=\linewidth]{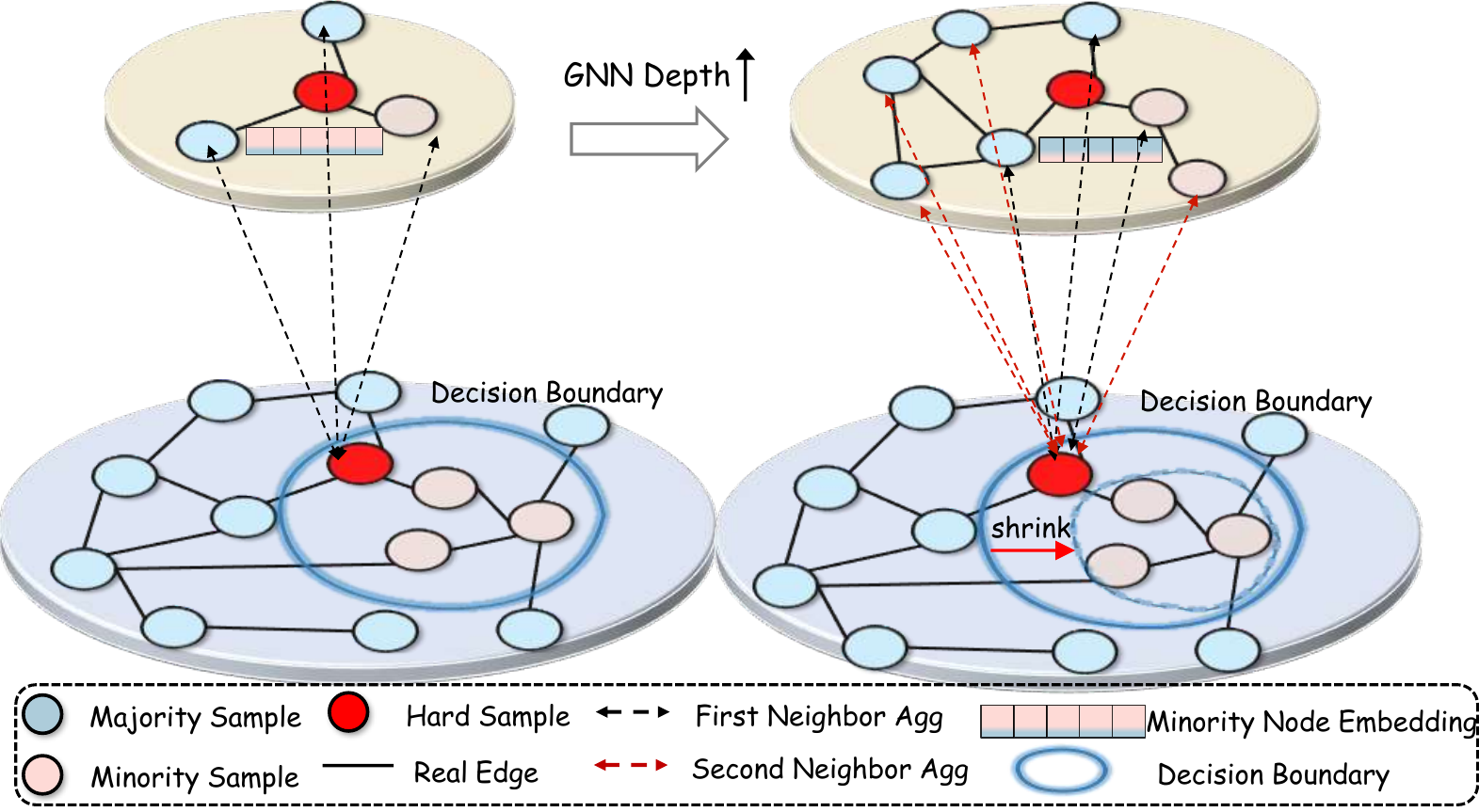}
\caption{Imbalanced node learning on graphs}
\label{lizi}
\end{figure}


Motivated by these observations, we propose a comprehensive imbalanced node classification framework named \textbf{GraphSB} (Graph Structural Balance) to address the inherent imbalanced graph structure. 
Our contributions are summarized as follows:
\begin{itemize}
\setlength{\itemsep}{0pt}
\setlength{\parsep}{0pt}
    \item \textbf{Theoretical Analysis}. 
    Our theoretical analysis reveals that the imbalanced graph structure leads GNNs to suffer from majority-class dominance and minority-class assimilation, causing biased representation of the minority-class nodes.
    Such bias will accumulate in subsequent node synthesis and learning stages, resulting in significantly degraded representation quality.

    \item \textbf{Model Design}. We propose GraphSB, which incorporates {\SB} as a key strategy to address imbalanced graph structure.
    {\SB} utilizes a two-stage optimization: (1) {\SE} mines hard samples near decision boundaries via dual-view analysis and strengthens their minority-class connectivity, while (2) {\RD} 
    propagates the enhanced minority context through multi-step iterative diffusion while capturing higher-order structural dependencies.
    
    \item \textbf{Experimental Evaluation}. Experiments on eight datasets demonstrate that GraphSB significantly outperforms the state-of-the-art methods. 
    More importantly, {\SB} can be seamlessly integrated into state-of-the-art methods as a simple plug-and-play module, increasing their accuracy by $4.57\%$ on average.
\end{itemize}

\section{Related Work}\label{sec:rela}

\subsection{Data-level Methods}

Data-level methods are dedicated to alleviating this quantity imbalance across different classes~\cite{ding2022data}. 
For example, GraphENS~\cite{9} synthesizes minority-class nodes by incorporating information from majority classes. 
GraphSMOTE~\cite{zhao2021graphsmote} extends the idea of SMOTE~\cite{chawla2002smote} to generate synthetic nodes.
GraphMixup~\cite{13} enhances Mixup~\cite{14} to synthesize minority-class nodes in the embedding space and subsequently determines the neighborhoods of the newly generated nodes with a pre-trained edge predictor. 
GraphSHA~\cite{GraphSHA} aims to synthesize harder minority samples.
Although these methods increase the number of minority-class samples, they focus mainly on quantity rebalancing while overlooking the inherent structural imbalance of graphs, fundamentally limiting their effectiveness.


\subsection{Algorithm-level Methods}

Algorithm-level methods mitigate class imbalance by optimizing the learning process through penalty functions, model regularization, or other sophisticated techniques.
For example, DR-GCN~\cite{DR-GCN} tackles the class-imbalanced problem by imposing two types of regularization, combining conditional adversarial training with latent distribution alignment. 
ReNode~\cite{ReNode} incorporates modified loss functions or designed penalty terms to emphasize minority classes. TAM~\cite{TAM} adjusts classification margins using local neighbor distributions but ignores global structural dominance.
BAT~\cite{liu2024class} augments the weights of minority-class nodes to alleviate ambivalent and long-range message passing, while IceBerg~\cite{Iceberg} leverages debiased self-training and decoupled propagation to reach distant unlabeled nodes. Despite these efforts, such methods primarily optimize the learning process to obtain a better embedding space for minority-class nodes, yet fail to address the fundamental imbalanced graph structure.



\section{Structural Imbalance in GNNs}
\label{sec:theoretical}

\subsection{Preliminaries}
Let $\mathcal{G}=(V,E,X)$ denote an attributed graph with node set $V$, edge set $E$, and feature matrix $X \in \mathbb{R}^{n \times p_0}$, where $n = |V|$ and $x_i \in \mathbb{R}^{p_0}$ represents the feature vector of node $i \in \{1,\cdots,n\}$. 
In message passing neural networks (MPNN)~\cite{MPNNs}, the node representation $h_i^{(\ell)} \in \mathbb{R}^{p_\ell}$ at layer $\ell \geq 0$ evolves through message passing. 
Let $\mathcal{N}(i)$ denote node $i$'s neighborhood, $\phi_\ell$, $\psi_\ell$ refer to the learnable update function and the message aggregation function, respectively. 
The layer-wise propagation rule is given by:
\begin{equation}
h_i^{(\ell+1)} = \phi_\ell\left(h_i^{(\ell)}, \sum_{j \in \mathcal{N}(i)} \hat{A}_{ij} \psi_\ell(h_i^{(\ell)}, h_j^{(\ell)})\right),
\end{equation}
where $\hat{A} = \tilde{D}^{-1/2}\tilde{A}\tilde{D}^{-1/2}$ refers to the normalized adjacency matrix with self-loops. $\tilde{A} = A + I$ denotes the adjacency matrix with self-connections and $\tilde{D}$ represents the degree matrix of $\tilde{A}$.  

Consider a class-imbalanced graph $\mathcal{G}$ with two classes, {\em i.e.}, $V = V_\text{mi}\cup V_\text{ma}$, where $V_\text{mi}$ and $V_\text{ma}$ refer to the nodes of the minority and majority class, respectively.
Let $n_1 = |V_\text{ma}|$ and $n_2 = |V_\text{mi}|$, the imbalance ratio is $\beta = n_1/n_2 \gg 1$. 
Under a Stochastic Block Model~\cite{SBM} SBM$(n, p, q)$ with intra-class edge probability $p$ and inter-class edge probability $q$, where $p \gg q$, the expected degrees for a minority-class node and a majority-class node are:
\begin{itemize}
    \item minority-class: $\mathbb{E}[d_\text{mi}] = n_1 p + n_2 q = n_1 (p + q \beta)$
    \item majority-class: $\mathbb{E}[d_\text{ma}] = n_1 q + n_2 p = n_1 (q + p \beta)$
\end{itemize}

Let $\tau = \frac{\mathbb{E}[d_\text{ma}]}{\mathbb{E}[d_\text{mi}]} = \frac{q + p\beta}{p + q\beta} > 1$ denote the degree disparity that quantifies the structural advantage of majority-class nodes, we then analyze how the structural imbalance incurs majority-class dominance and minority-class assimilation.

\subsection{Majority-class Dominance}

In class imbalance, two key mechanisms disadvantage minority-class nodes: information dilution through over-squashing and gradient dominance from the majority class.

\textbf{The over-squashing phenomenon} in GNNs, first analyzed by Alon {\em et al.} \cite{bottleneck}, manifests as exponential decay of information from distant nodes. Consider an MPNN with node features $X \in \mathbb{R}^{n\times p_0}$, learnable update function $\phi_\ell$, and message aggregation function $\psi_\ell$, satisfying $\|\nabla\phi_\ell\| \leq \eta_1$, $\|\nabla\psi_\ell\| \leq \eta_2$ where $\eta_1$ and $\eta_2$ are two constants. 
For nodes $u, v\in V$ with geodesic distance $d_\mathcal{G}(u,v) = r+1$, the Jacobian sensitivity is as follows:

\begin{equation}
\left|\frac{\partial h_u^{(r+1)}}{\partial x_v}\right| \leq (\eta_1\eta_2)^{r+1}(\hat{A}^{r+1})_{uv}
\label{eq:jacobian_bound}.
\end{equation}

In bottleneck structures such as $b$-ary trees, $(\hat{A}^{r+1})_{uv} = \mathcal{O}(b^{-r})$ confirms exponential information distortion.

\begin{definition}
\textbf{Information Dilution} is the phenomenon where information propagation between nodes decays rapidly as the distance $L$ increases. In class-imbalanced graphs, this decay is further accelerated by the degree disparity $\tau$, with the attenuation factor amplified by $\tau^{-L}$.
\end{definition}

The relationship between information dilution and the over-squashing phenomenon is formally quantified in the following Theorem~\ref{thm:info_dilution} (detailed proof given in Appendix~\ref{app:proofs}).

\begin{restatable}{theorem}{theoremone}
\label{thm:info_dilution}
For a path of length $L$ in a class-imbalanced graph with imbalance ratio $\beta$ and degree disparity $\tau$, the expected path weight satisfies:
\begin{equation}
\mathbb{E}[\mathcal{W}_L] = \frac{W^{(L)}}{\mathbb{E}[d_\text{mi}]^L}\left(\frac{\beta+\tau}{\tau(\beta+1)}\right)^L,
\end{equation}
where $\mathcal{W}_L$ denotes the weight of information propagated along a path of length $L$. $W^{(L)} = \prod_{\ell=1}^L \nabla\phi_\ell \nabla\psi_\ell$ represents the cumulative weight matrix product.
\end{restatable}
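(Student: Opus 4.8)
The plan is to read $\mathcal{W}_L$ as the contribution of a single length-$L$ path to the Jacobian sensitivity of Eq.~\eqref{eq:jacobian_bound}, and then to average its graph-dependent part over the SBM. For a path $v_0\!-\!v_1\!-\!\cdots\!-\!v_L$, applying the chain rule through the $L$ message-passing layers factorizes this path's contribution as
\begin{equation}
\mathcal{W}_L \;=\; W^{(L)}\,\prod_{\ell=1}^{L}\hat{A}_{v_{\ell-1}v_\ell},\qquad W^{(L)}=\prod_{\ell=1}^{L}\nabla\phi_\ell\,\nabla\psi_\ell ,
\end{equation}
where $W^{(L)}$ is deterministic and only the product of normalized-adjacency entries carries the structural randomness. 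The first step is to expand the symmetric normalization $\hat{A}_{ij}=\tilde{A}_{ij}/\sqrt{\tilde{d}_i\tilde{d}_j}$ and telescope along the path: each interior vertex appears in two consecutive edge factors and each endpoint in one, so $\prod_{\ell=1}^{L}\hat{A}_{v_{\ell-1}v_\ell}=\tilde{d}_{v_0}^{-1/2}\tilde{d}_{v_L}^{-1/2}\prod_{\ell=1}^{L-1}\tilde{d}_{v_\ell}^{-1}$, i.e.\ exactly $L$ inverse-degree factors in total (the two endpoint half-powers together with the $L-1$ interior full powers).

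The second step is to take the expectation over SBM$(n,p,q)$. In the dense regime each degree $\tilde{d}_v$ concentrates (Chernoff) around its class mean $\mathbb{E}[d_\text{ma}]$ or $\mathbb{E}[d_\text{mi}]$, so every inverse-degree factor may be replaced by the reciprocal of the corresponding class mean up to lower-order error; moreover a vertex sampled along the path is majority with probability $\tfrac{\beta}{\beta+1}$ and minority with probability $\tfrac{1}{\beta+1}$, so by the law of total expectation over the vertex's class $\mathbb{E}[\tilde{d}_v^{-1}]\approx\frac{\beta}{\beta+1}\cdot\frac{1}{\mathbb{E}[d_\text{ma}]}+\frac{1}{\beta+1}\cdot\frac{1}{\mathbb{E}[d_\text{mi}]}$. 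Treating the $L$ degree factors as approximately independent draws then gives
\begin{equation}
\mathbb{E}\!\left[\prod_{\ell=1}^{L}\hat{A}_{v_{\ell-1}v_\ell}\right]\;\approx\;\left(\frac{\beta}{\beta+1}\cdot\frac{1}{\mathbb{E}[d_\text{ma}]}+\frac{1}{\beta+1}\cdot\frac{1}{\mathbb{E}[d_\text{mi}]}\right)^{\!L}.
\end{equation}
Finally, substituting $\mathbb{E}[d_\text{ma}]=\tau\,\mathbb{E}[d_\text{mi}]$ collapses the bracket to $\dfrac{1}{\mathbb{E}[d_\text{mi}]}\cdot\dfrac{\beta+\tau}{\tau(\beta+1)}$, and multiplying back by the deterministic $W^{(L)}$ yields the claimed $\mathbb{E}[\mathcal{W}_L]=\dfrac{W^{(L)}}{\mathbb{E}[d_\text{mi}]^{L}}\big(\tfrac{\beta+\tau}{\tau(\beta+1)}\big)^{L}$.

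I expect the main obstacle to be making the averaging step legitimate rather than the algebra: the degrees along a fixed path are not genuinely independent (adjacent vertices share an edge, the endpoints play a special role, and conditioning on the path's existence biases toward higher-degree vertices), and $\mathbb{E}[\tilde{d}_v^{-1}]\neq 1/\mathbb{E}[\tilde{d}_v]$ in general. My intention is to control this at a mean-field level of rigor — invoking degree concentration in the dense SBM to bound the gap between $\mathbb{E}[\tilde{d}_v^{-1}]$ and the class-mean reciprocal, and arguing that pairwise correlations contribute only $o(1)$ multiplicative corrections — rather than through an exact second-moment expansion. The remaining pieces (the telescoping bookkeeping that turns the endpoint half-powers into a clean $L$-fold product, pulling the deterministic $W^{(L)}$ out of the expectation, and the final one-line simplification using $\tau$) are routine.
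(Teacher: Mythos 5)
Your proposal is correct and essentially reproduces the paper's argument: the paper assigns each step along the path a weight $1/\mathbb{E}[d_\text{ma}]$ or $1/\mathbb{E}[d_\text{mi}]$ according to whether the node is majority (probability $\tfrac{\beta}{\beta+1}$) or minority (probability $\tfrac{1}{\beta+1}$), assumes independence across the $L$ steps, and collapses the resulting binomial sum to $\frac{W^{(L)}}{\mathbb{E}[d_\text{mi}]^L}\bigl(\tfrac{\beta+\tau}{\tau(\beta+1)}\bigr)^L$ --- exactly the $L$-th power of your two-term mixture. The mean-field caveats you flag (degree concentration, $\mathbb{E}[1/d]\neq 1/\mathbb{E}[d]$, non-independence along the path, and the endpoint half-powers from the symmetric normalization) are simply taken for granted in the paper's proof, so your write-up is, if anything, more explicit about the approximations than the original.
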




Theorem~\ref{thm:info_dilution} demonstrates that the majority-class nodes accelerate information dilution, causing node features of the minority-class to be overwhelmed in message passing.

To better characterize the mechanism where the majority-class nodes dominate the learning process, we formally introduce the concept of Gradient Dominance as follows.

\begin{definition}
\textbf{Gradient Dominance} is the phenomenon in class-imbalanced graphs where the gradients associated with majority-class nodes overwhelmingly influence the parameter updates during training, thereby suppressing the learning information from minority-class nodes.
\end{definition}

The effect of Gradient Dominance is formally quantified in Theorem~\ref{thm2} (detailed proof given in Appendix~\ref{app:proofs}).

\begin{restatable}{theorem}{theoremtwo}
\label{thm2}
Assume path weight $\mathcal{W}_L$ is independent of node degrees, then the expected ratio of gradient magnitudes between a minority-class node $u \in V_\text{mi}$ and a majority-class node $v \in V_\text{ma}$ at depth $L$ is:
\begin{align}
\frac{\mathbb{E}[\|\nabla_\theta L(u)\|]}{\mathbb{E}[\|\nabla_\theta L(v)\|]} &\propto 
\frac{1}{\beta}.
\end{align}
\end{restatable}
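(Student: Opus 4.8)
The plan is to unroll the per-node loss gradient through the $L$ message-passing layers and reduce its magnitude to a single structural quantity in which minority and majority nodes differ.

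First I would write $\nabla_\theta L(u) = \frac{\partial \ell_u}{\partial h_u^{(L)}}\,\frac{\partial h_u^{(L)}}{\partial \theta}$ and expand $\frac{\partial h_u^{(L)}}{\partial \theta}$ by the chain rule across layers $\ell=0,\dots,L-1$. Unrolling the propagation rule, each contribution factors as a product of the layer Jacobians $\nabla\phi_\ell,\nabla\psi_\ell$ — whose cumulative product is exactly the matrix $W^{(L)}$ appearing in Theorem~\ref{thm:info_dilution} — multiplied by a sum over the source nodes reachable from $u$ within $L$ hops, each weighted by a structural coefficient and carrying its own feature (class) signal. Invoking the hypothesis that the path weight $\mathcal W_L$ is independent of node degrees, this analytic/structural weight becomes a common constant across all target nodes, so that $\|\nabla_\theta L(u)\|$ is governed — up to that constant and the bounded error term $\|\partial\ell_u/\partial h_u^{(L)}\|$ — by the amount of \emph{class-consistent} signal that $u$ aggregates: cross-class sources do not reinforce $u$'s label and average out in expectation, while same-class sources do. This reduces the relevant quantity to (a count of) the same-class nodes lying within the $L$-hop receptive field of $u$.

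Next I would take expectations over the SBM. For $\beta\gg1$ and $p\gg q$ the graph is well connected, so with high probability the $L$-hop neighborhood of a node contains essentially all nodes of its own class; hence the expected class-consistent signal reaching a minority node $u\in V_\text{mi}$ is $\asymp c\,n_2$ and that reaching a majority node $v\in V_\text{ma}$ is $\asymp c\,n_1$, where $c$ collects the analytic constant $W^{(L)}$, the $L$-dependent and any residual degree-dependent factors (a bounded power of $\tau$), and the common normalization — all shared between the two. Standard degree concentration for the SBM justifies replacing the random quantities by these expectations.

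Forming the ratio then gives $\frac{\mathbb E[\|\nabla_\theta L(u)\|]}{\mathbb E[\|\nabla_\theta L(v)\|]} \asymp \frac{n_2}{n_1} = \frac{1}{\beta}$, i.e.\ exactly the claimed $\propto 1/\beta$. The main obstacle is the reduction in the first step: one must argue rigorously that $\|\nabla_\theta L(u)\|$ is driven by the count of same-class sources — that is, that cross-class contributions and the walk-multiplicity weights are lower order or are precisely the degree-dependent pieces the assumption on $\mathcal W_L$ licenses us to discard — and, secondarily, that expectation may be interchanged with the norm, which I would control through SBM degree concentration in the regime $p\gg q$, $\beta\gg1$.
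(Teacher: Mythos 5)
Your proposal is correct and follows essentially the same route as the paper's proof: both reduce the gradient magnitude to the amount of same-class signal in the node's $L$-hop receptive field, cancel the common path-weight and receptive-field-size factors $\mathcal{W}_L$ and $\mathbb{E}[N^{(\ell)}]$ using the independence assumption, and obtain the ratio of same-class populations $n_2/n_1 = 1/\beta$ (the paper phrases this as class contributions $\mathcal{W}_L\cdot\frac{1}{\beta+1}\cdot\mathbb{E}[N^{(\ell)}]$ versus $\mathcal{W}_L\cdot\frac{\beta}{\beta+1}\cdot\mathbb{E}[N^{(\ell)}]$). Your added remarks on cross-class cancellation and SBM concentration simply make explicit the heuristic step the paper also leaves informal, so there is no substantive difference in approach.
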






Theorem~\ref{thm2} demonstrates that the numerical advantage and higher degree of majority-class nodes make them dominant in information propagation and gradient updates.

\subsection{Minority-class Assimilation}


\begin{definition}
\textbf{Minority-class Assimilation} refers to the progressive convergence of minority-class nodes' representations toward the centroids of majority classes during GNN message passing.
As a result, the distinctive characteristics of the minority-class nodes are overwhelmed by the predominant influence of the majority-class nodes, leading to reduced class separability in the deeper layers.
\end{definition}

The following Theorem~\ref{thm:feature_conv} reveals how the node features of minority classes assimilate to those of the majority classes as the network depth increases.

\begin{restatable}{theorem}{theoremthree}
\label{thm:feature_conv}
In class-imbalanced graphs with imbalance ratio $\beta \gg 1$, the centroid distance $\Delta^{(\ell)} = \mu_\text{mi}^{(\ell)} - \mu_\text{ma}^{(\ell)}$ between minority and majority classes decays exponentially with network depth, which holds:
\begin{equation}
\|\Delta^{(\ell)}\| \leq C \cdot (\sigma_{\max}(W)\lambda_2(M))^\ell\|\Delta^{(0)}\|,
\end{equation}
where $C$ is a constant, $\Delta^{(0)}$ refers to the initial centroid distance, $\sigma_{\max}(W)$ represents the maximum singular value of weight matrix $W$, and $\lambda_2(M)$ denotes the second eigenvalue of the propagation matrix $M$ with $\sigma_{\max}(W)\lambda_2(M) < 1$.
\end{restatable}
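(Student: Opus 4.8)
The plan is to pass from the full MPNN update to the linearized surrogate $H^{(\ell+1)} = \rho\!\big(\hat{A}H^{(\ell)}W\big)$, where $\rho$ is the pointwise activation (assumed $1$-Lipschitz) and $W$ absorbs the layer weights; this is the standard simplification obtained by taking $\psi_\ell(h_i,h_j)=h_j$ and $\phi_\ell$ linear, and it is faithful up to the Lipschitz constants $\eta_1,\eta_2$ already used in Theorems~\ref{thm:info_dilution} and~\ref{thm2}. Writing $\bOne_\text{ma},\bOne_\text{mi}$ for the class-indicator vectors and $\mu_c^{(\ell)} = \tfrac{1}{|V_c|}\bOne_c^\top H^{(\ell)}$, I would average the update over each class. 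Under SBM$(n,p,q)$ the expected row-blocks of $\hat{A}$ are constant within each class, so $\mathbb{E}[\hat{A}]$ collapses to a $2\times 2$ block-mixing matrix $M$ whose entries are the normalized intra/inter-class mass terms $n_1p, n_1q, n_2p, n_2q$ divided by the expected degrees $\mathbb{E}[d_\text{ma}], \mathbb{E}[d_\text{mi}]$ from the Preliminaries. This yields the centroid recursion $\big[\mu_\text{ma}^{(\ell+1)};\mu_\text{mi}^{(\ell+1)}\big] = \rho\!\big(M\,\big[\mu_\text{ma}^{(\ell)};\mu_\text{mi}^{(\ell)}\big]\,W + \mathcal{E}^{(\ell)}\big)$, where $\mathcal{E}^{(\ell)}$ collects within-class dispersion and SBM fluctuation.

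\textbf{Spectral contraction of the difference direction.}
Next I analyze $M$: being (approximately) row-stochastic, its Perron eigenvalue is $\lambda_1 = 1$ with right eigenvector aligned with the degree-weighted stationary vector, and the remaining eigenvalue is $\lambda_2(M) = \tfrac{p-q}{p+q}$ up to a correction governed by the degree disparity $\tau$; in particular $\lambda_2(M) < 1$ strictly because $p \gg q$. The key observation is that $\Delta^{(\ell)} = \mu_\text{mi}^{(\ell)} - \mu_\text{ma}^{(\ell)}$ is, by construction, the coordinate of the stacked state along the non-Perron eigendirection: the component of $\big[\mu_\text{ma}^{(\ell)};\mu_\text{mi}^{(\ell)}\big]$ parallel to the Perron vector cancels in the difference. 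Hence one step of the block recursion multiplies $\Delta^{(\ell)}$ by exactly $\lambda_2(M)$ in the purely linear, expectation regime (an equality when the Perron vector is uniform, and up to a bounded multiplicative factor when $\tau>1$ skews it, which I fold into $C$).

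\textbf{Combine with the feature map and iterate.}
Then I bound the right-multiplication, $\|\Delta\,W\| \le \sigma_{\max}(W)\|\Delta\|$, and use $1$-Lipschitzness of $\rho$ together with the triangle inequality to absorb the activation and the remainder $\mathcal{E}^{(\ell)}$: $\|\Delta^{(\ell+1)}\| \le \sigma_{\max}(W)\,\lambda_2(M)\,\|\Delta^{(\ell)}\| + \|\mathcal{E}^{(\ell)}\|$, where $\|\mathcal{E}^{(\ell)}\|$ is controlled by the within-class embedding variance times the SBM concentration error. Since $\beta \gg 1$ makes the block structure concentrate, this remainder is a bounded fraction of the leading term and can be charged to the constant; iterating $\ell$ times then gives $\|\Delta^{(\ell)}\| \le C\,(\sigma_{\max}(W)\lambda_2(M))^\ell\,\|\Delta^{(0)}\|$, and the hypothesis $\sigma_{\max}(W)\lambda_2(M) < 1$ turns this into exponential decay, proving the claim.

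\textbf{Main obstacle.}
The delicate part is not the spectral contraction but rigorously controlling the two sources of slack subsumed in $C$: (i) the nonlinearity, since the centroid of $\rho(\cdot)$ is not $\rho$ of the centroid, so a Jensen-type gap appears whose magnitude must be tied to the within-class dispersion and propagated through a \emph{joint} induction on $\|\Delta^{(\ell)}\|$ and on the embedding norms $\|H^{(\ell)}\|$; and (ii) the degree disparity $\tau$, which makes the Perron eigenvector of $M$ non-uniform so that $\Delta^{(\ell)}$ is only approximately orthogonal to it. Making both precise requires either working with the exact SBM expectation or invoking a matrix-concentration bound on $\hat{A}-\mathbb{E}[\hat{A}]$; I would adopt the expectation version for the cleanest statement and treat the fluctuation term as a lower-order perturbation.
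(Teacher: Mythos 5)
Your proposal follows essentially the same route as the paper's proof: linearize the layer update, take SBM expectations so that the class centroids obey a two-dimensional recursion $\mathbf{z}^{(\ell)} = (WM)^{\ell}\mathbf{z}^{(0)}$ with exactly the paper's mixing matrix $M$ (entries $\tfrac{p}{p+q\beta},\tfrac{q\beta}{q+p\beta},\tfrac{q}{p+q\beta},\tfrac{p\beta}{q+p\beta}$), bound the weight action by $\sigma_{\max}(W)$, and attribute the decay of the centroid difference to the subdominant eigenvalue $\lambda_2(M)$. You are in fact more explicit than the paper at the two points it glosses over: the paper works with a purely linear GCN layer and never mentions the activation, and it simply asserts that "the convergence rate is governed by $\sigma_{\max}(W)\lambda_2(M)$" without arguing why the Perron mode does not enter the difference.

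One caveat on the step you flag yourself: folding the Perron-skew error into the constant $C$ does not work as stated. In the imbalanced regime ($\tau>1$, $\beta\gg 1$) the Perron eigenvector of $M$ is far from uniform (e.g.\ with the paper's numbers $p=0.5,q=0.1,\beta=10$ one gets $v_1\propto[1,\,3.4]$ and $\lambda_1\approx 1$), so the difference functional $[-1,1]$ is not a left $\lambda_2$-eigenvector; the contamination it picks up is an \emph{additive} component evolving at rate $\lambda_1^{\ell}\approx 1$, which cannot be absorbed into a multiplicative constant in front of $\lambda_2^{\ell}$. A rigorous version would need either an assumption that the initial state has no Perron component along the difference, or a change of coordinates to the (skewed) eigenbasis with the "distance" redefined through the left $\lambda_2$-eigenvector. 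That said, the paper's own proof makes exactly the same leap implicitly, so your argument is at parity with, and slightly more self-aware than, the published one; also note your formula $\lambda_2(M)=\tfrac{p-q}{p+q}$ is only the balanced-case value (the paper's example gives $\lambda_2\approx 0.313$, not $0.667$), though this does not affect the structure of the argument.
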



For example, let $p = 0.5$, $q = 0.1$, and $\beta = 10$, we have
\[
M = \begin{bmatrix}
0.333 & 0.196 \\
0.067 & 0.980
\end{bmatrix},
\quad \lambda_2 \approx 0.313.
\]

Since $\lambda_2 < 1$, it governs the convergence rate. 
By spectral decomposition, the magnitude of $\Delta^{(\ell)}$ decays as $\mathcal{O}(0.313^\ell)$, showing how rapidly the node features of minority classes assimilate with the increase of network depth.



\section{Methodology}
\label{sec4}

\begin{figure*}[!ht]
\centering
\includegraphics[width=\linewidth]{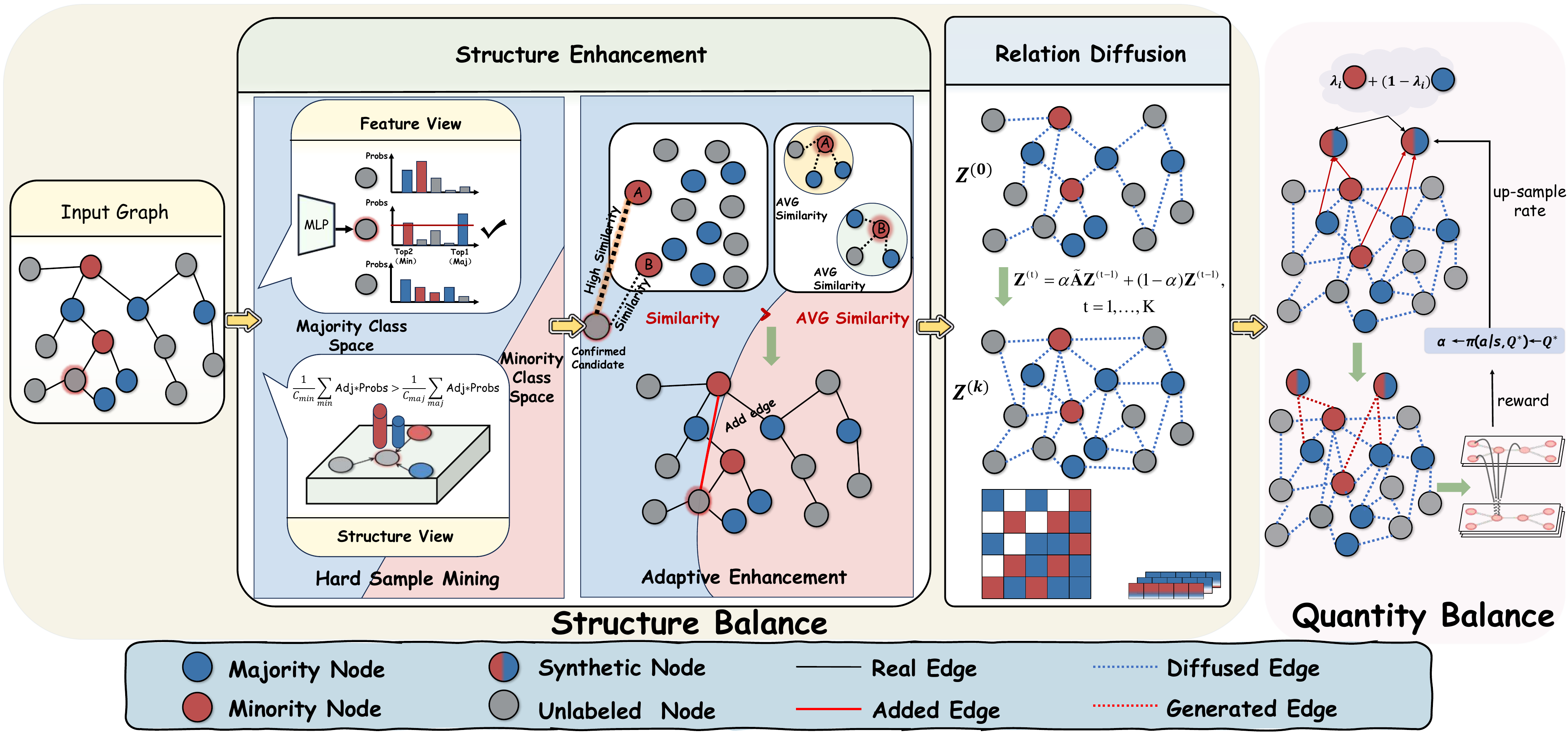}
\caption{Overview of the GraphSB framework.}
\label{framework}
\end{figure*}

Building on the theoretical analysis presented in the above section, we propose GraphSB (Graph Structural Balance) to explicitly address the inherent structural imbalance in graphs. 
As illustrated in Figure~\ref{framework}, {\SB} involves a two-stage optimization: (1) {\SE} mines hard samples and enhances minority connectivity through adaptive augmentation, and (2) {\RD} propagates the enhanced minority context while capturing higher-order structural dependencies. 
For the quantity balance, we follow the same approach as GraphMixup~\cite{13}.
It is worth noting that our proposed {\SB} can be integrated into any state-of-the-art methods besides GraphMixup (Complexity analysis is detailed in the Appendix~\ref{app:complexity}).

\subsection{Structure Enhancement}  
In \textbf{imbalanced graphs}, the scarcity of minority-class edges hampers minority nodes from receiving sufficient same-class context during aggregation.
We therefore propose {\SE}, which mines hard samples near decision boundaries through dual-view analysis and enhances minority connectivity via adaptive augmentation.

\textbf{(1) Hard Sample Mining:} We employ dual-view (feature view vs. neighbor view) analysis to mine hard samples that exhabit \textbf{feature-neighbor conflicts}, where a node's feature-based prediction diverges from its neighborhood consensus. Such conflicts reveal samples suffering from minority subspace compression, which are critical for recovering blurred decision boundaries through targeted structural correction.

\textbf{\textit{Feature view:}} We leverage a lightweight MLP encoder to project input features into a latent embedding space and obtain node-level predictions via softmax:
\begin{equation}
\mathbf{h}_u = \text{MLP}(\mathbf{x}_u) \in \mathbb{R}^{d},
\end{equation}
\begin{equation}
P_{\text{self}}(u) = \text{Softmax}(\mathbf{h}_u) \in \mathbb{R}^{|\mathcal{C}|},
\end{equation}
where $\mathcal{C}$ denotes the set of node classes. 

Let $\hat{y}^{(1)}_u$ and $\hat{y}^{(2)}_u$ denote node $u$'s top-1 and top-2 predicted classes.
A node $u$ with $\hat{y}^{(1)}_u$ indicating majority-class while $\hat{y}^{(2)}_u$ indicating minority-class is considered to lie near decision boundaries where the minority subspace is compressed by the majority class.
These nodes are identified by:
\begin{equation}
\mathcal{S}_{\text{init}} = \left\{ u \in \mathcal{V} \mid \hat{y}^{(1)}_u \in \mathcal{C}_{\text{ma}} \land \hat{y}^{(2)}_u \in \mathcal{C}_{\text{mi}} \land P_{\text{self}}(\hat{y}^{(2)}_u) > \xi \right\},
\end{equation}
where $\mathcal{C}_{\text{ma}}$ and $\mathcal{C}_{\text{mi}}$ denote majority and minority class sets, Given the $0.5$ bound for the second-highest probability, we set $\xi$ to half this limit to filter noise while retaining valid signals.

\textbf{\textit{Neighbor View:}} We aggregate neighbor predictions via soft voting to validate  consensus and compute the average neighborhood prediction for minority and majority classes as:
\begin{equation}
P_{\text{neigh}}(u) = \sum_{v \in \mathcal{N}(u)} \hat{A}_{uv} \cdot P_{\text{self}}(v),
\end{equation}
\begin{equation}
\bar{P}_{\text{neigh}}(u, \mathcal{C}) = \frac{1}{|\mathcal{C}|} \sum_{c \in \mathcal{C}} P_{\text{neigh}}(u)[c],
\end{equation}
where $\hat{A}$ denotes the normalized adjacency matrix. 

The final candidate set requires neighborhood consensus to favor minority classes, confirming these nodes as genuine minority members worth augmenting as:
\begin{equation}
\mathcal{S}_{\text{cand}} = \left\{ u \in \mathcal{S}_{\text{init}} \mid \bar{P}_{\text{neigh}}(u, \mathcal{C}_{\text{mi}}) > \bar{P}_{\text{neigh}}(u, \mathcal{C}_{\text{ma}}) \right\}.
\end{equation}

Generally, this dual-view screening mines hard samples that exhibit feature-neighbor conflict: feature predictions lean toward majority while neighborhood consensus supports minority. 
These validated minority neighborhoods also provide high-quality connections for subsequent relation diffusion.

\textbf{(2) Adaptive Augmentation:} After identifying these hard samples, we connect them to minority-class anchors in the training set $\mathcal{T}$ to strengthen minority context in their local neighborhoods. 
To prevent incorrect edge construction and preserve homophily, we impose a neighbor-level similarity constraint using the learned embeddings $\mathbf{h}$ as:
\begin{equation}
\tau_v = \frac{1}{|\mathcal{N}(v)|} \sum_{w \in \mathcal{N}(v)} \text{sim}(\mathbf{h}_v, \mathbf{h}_w),
\end{equation}
where $\text{sim}(\cdot, \cdot)$ denotes cosine similarity. 

For each candidate $u\in \mathcal{S}_{\text{cand}}$, we identify the most similar anchor within its predicted minority class $c^* = \hat{y}^{(2)}_u$ as:
\begin{equation}
v^* = \underset{v \in \mathcal{T} \cap \{c^*\}}{\text{argmax}} \; \text{sim}(\mathbf{h}_u, \mathbf{h}_v).
\end{equation}

Critically, an edge $(u, v^*)$ is added only if the candidate is more similar to the anchor than the anchor's average neighbors, {\em i.e.}, $\text{sim}(\mathbf{h}_u, \mathbf{h}_{v^*}) > \tau_{v^*}$.
The augmented edge set is:
\begin{equation}
E_{\text{aug}} = E \cup \left\{ (u, v^*) \mid u \in \mathcal{S}_{\text{cand}}, \text{sim}(\mathbf{h}_u, \mathbf{h}_{v^*}) > \tau_{v^*} \right\}.
\end{equation}

Through dual-view analysis, {\SE} identifies hard samples near decision boundaries that are crucial for recovering blurred class separations. 
The adaptive augmentation ensures edge quality by requiring source nodes to be more similar to anchors than their existing neighbors, thereby maintaining graph homophily while providing high-quality minority connections for subsequent propagation.

\subsection{Relation Diffusion}
While {\SE} effectively repairs the local topology, standard graph convolution remains limited to local neighborhoods, making it difficult to stably capture long-range dependencies.
To overcome this limitation, we further refine the connectivity structure utilizing a sparse iterative {\RD} designed to globally propagate the enriched minority context, thereby capturing high-order structural dependencies beyond immediate neighbors.

First, based on the augmented adjacency matrix $\mathbf{A}'$ obtained from {\SE}, we compute the symmetrically normalized matrix $\mathbf{\hat{A}} = \mathbf{D}^{-\frac{1}{2}}(\mathbf{A}' + \mathbf{I})\mathbf{D}^{-\frac{1}{2}}$, where $\mathbf{I}$ denotes the self-loop identity matrix.
We adopt an efficient iterative propagation strategy.
Specifically, we first project the input features into a latent space:
\begin{equation}
    \mathbf{Z}^{(0)} = \mathbf{X}\mathbf{W} + \mathbf{b},
\end{equation}
where $\mathbf{W}$ and $\mathbf{b}$ are learnable parameters.

Next, to enhance robustness and prevent overfitting to specific structural noise, we incorporate a \textbf{Stochastic Structure Perturbation} ({\em i.e.}, edge dropout) strategy \textit{before} propagation.
We generate a perturbed sparse adjacency matrix $\mathbf{\tilde{A}}$ by randomly masking edges from $\mathbf{\hat{A}}$ with a dropout rate $p_{\text{drop}}$.
Formally, for each edge $(i,j) \in E_{\text{aug}}$, we sample a Bernoulli mask $m_{ij} \sim \text{Bernoulli}(1-p_{\text{drop}})$ and re-scale the weights:
\begin{equation}
    \mathbf{\tilde{A}}_{ij} = \mathbf{\hat{A}}_{ij} \cdot m_{ij} \cdot (1-p_{\text{drop}})^{-1}.
\end{equation}

Finally, we perform $K$-step sparse iterative relation diffusion on the projected features using the perturbed structure. This process acts as a low-pass filter to smooth node embeddings within the repaired same-class neighborhoods:
\begin{equation}
    \mathbf{Z}^{(t)} = \alpha \mathbf{\tilde{A}}\mathbf{Z}^{(t-1)} + (1-\alpha)\mathbf{Z}^{(t-1)}, \quad t=1,\dots,K,
\end{equation}
where the coefficient $\alpha \in (0,1)$ controls the trade-off between local preservation and neighborhood information gain.



Finally, we inject the bias term and apply the activation function with feature dropout to obtain final representations:
\begin{equation}
    \mathbf{H} = \text{dropout}\left(\sigma\left(\mathbf{Z}^{(K)} + \mathbf{b}\right), p\right),
\end{equation}
where $\mathbf{b}$ is the bias vector, $\sigma$ denotes the nonlinear activation function (e.g., ReLU), and $p$ is the feature dropout rate.
This formulation avoids dense matrix operations, ensuring linear complexity $\mathcal{O}(K|E|d)$ while effectively propagating minority context globally.

\subsection{Objective Optimization and Training Strategy}
\label{sec3.5}

After obtaining high-quality node embeddings via {\SB}, we employ a GNN-based classifier as: 
\begin{equation}
h_v^{(L+1)} = \sigma\left(\mathbf{\widetilde{W}}^{(1)} \cdot [h_v^{(L)} \parallel \mathbf{H}_O^{(L)} \cdot \mathbf{A}_O[:, v]]\right),
\end{equation}
\begin{equation}
\hat{\mathbf{y}}_v = \text{softmax}\left(\mathbf{\widetilde{W}}^{(2)} \cdot h_v^{(L+1)}\right),
\end{equation}
where $\parallel$ denotes the concatenation operation, $\mathbf{\widetilde{W}}^{(1)} \in \mathbb{R}^{F \times 2F}$ and $\mathbf{\widetilde{W}}^{(2)} \in \mathbb{R}^{M \times F}$ are both learnable weight matrices, and $\mathbf{A}_O[:, v]$ represents the connectivity pattern of node $v$. This architecture enhances node classification by incorporating both node features and structural information.

The classifier is optimized using cross-entropy loss on the combined set of original and synthetic labeled nodes as:
\begin{equation}
\mathcal{L}_{\text{node}} = \frac{1}{|\cV_O|} \sum_{v \in \cV_O} \mathcal{L}_{CE}(\hat{\mathbf{y}}_v, y_v),
\end{equation}
where $\cV_O$ represents the set of labeled nodes including synthesized nodes and $\mathcal{L}_{CE}$ is the cross-entropy loss function. 

The overall training objective of GraphSB integrates structure balance, node synthesis, and classification. 
The final objective function can be defined as
\begin{equation}
\min_{\theta, \gamma, \phi} \mathcal{L}_{\text{node}} + \mathcal{L}_{\text{edge}},
\end{equation}
where $\theta$, $\gamma$, and $\phi$ represent the parameters for the feature extractor, edge predictor, and node classifier, respectively. 
$\mathcal{L}_{\text{edge}}$ denotes the edge reconstruction loss.

Since this paper utilizes GraphMixup as the quantity balance module, we adopt the same two-phase training strategy.
First, the feature encoder and edge generation module undergo structural initialization through exclusive optimization of $\mathcal{L}_{\text{edge}}$. 
Then, this pre-trained configuration serves as the foundation for a holistic network refinement, where the composite $\mathcal{L}_{\text{node}} + \mathcal{L}_{\text{edge}}$ drives joint parameter updates.

\section{Experiments}
\label{sec5}



\subsection{Experimental Setup}
\label{sec5.1}

\subsubsection{Datasets}
\label{sec4.1.1}
We evaluate GraphSB on eight benchmark datasets: three citation networks (Cora, Citeseer, PubMed)~\cite{sen2008collective}, two e-commerce networks (Amazon-Photo, Amazon-Computers)~\cite{huang2017label}, one co-authorship network (Wiki-CS)~\cite{mernyei2020wiki}, and two large-scale graphs (ogbn-arxiv, CoraFull) that exhibit natural extreme class imbalance. Table~\ref{tab:dataset_stats} summarizes the key statistics, where $\rho_0$ denotes the naive imbalance ratio between the size of the smallest class and that of the largest class. 
\begin{table}[h]
\centering
\scriptsize
\setlength{\tabcolsep}{1.8mm}
\caption{Dataset statistics.}
\vspace{-5pt}
\label{tab:dataset_stats}
\begin{tabular}{lrrrrr}
\toprule
Dataset & Nodes & Edges & Features & Classes& $\rho_0$  \\
\toprule
Cora & 2,708 & 10,556 & 1,433 & 7 & 0.22 \\
Citeseer & 3,327 & 9,104 & 3,703 & 6 & 0.36 \\
PubMed & 19,717 & 88,648 & 500 & 3 & 0.30 \\
Photo & 7,650 & 238,162 & 500 & 8 & 0.17 \\
Computers & 13,752 & 491,722 & 500 & 10 & 0.056 \\
Wiki-cs & 11,701 & 216,123 & 300 & 10 & 0.35 \\
ogbn-arxiv & 169,243 & 1,166,243 & 129 & 40 & 0.001 \\
CoraFull & 19,793 & 126,842 & 8,710 & 70 & 0.016 \\
\bottomrule
\end{tabular}
\end{table}

\begin{table*}[!t]
\centering
\scriptsize 
\setlength{\tabcolsep}{1.5pt} 
\caption{Performance comparison on six datasets (Acc and Macro-F1). The standard deviation is formatted as a subscript.}
\label{tab:merged_results_subscript}
\vspace{-5pt}
\begin{tabular*}{\textwidth}{@{\extracolsep{\fill}}lcccccccccccc@{}}
\toprule
\multirow{2}{*}{Methods} & \multicolumn{2}{c}{Cora} & \multicolumn{2}{c}{Citeseer} & \multicolumn{2}{c}{PubMed} & \multicolumn{2}{c}{Photo} & \multicolumn{2}{c}{Computers} & \multicolumn{2}{c}{Wiki-CS} \\
\cmidrule(lr){2-3} \cmidrule(lr){4-5} \cmidrule(lr){6-7} \cmidrule(lr){8-9} \cmidrule(lr){10-11} \cmidrule(lr){12-13}
 & Acc & Macro-F1 & Acc & Macro-F1 & Acc & Macro-F1 & Acc & Macro-F1 & Acc & Macro-F1 & Acc & Macro-F1 \\
\midrule
Vanilla      
& 68.12\sd{0.73} & 68.14\sd{0.55} & 54.62\sd{0.61} & 53.75\sd{0.54} & 67.51\sd{1.04} & 67.43\sd{1.12} 
& 81.23\sd{0.82} & 81.34\sd{0.75} & 80.45\sd{0.71} & 79.92\sd{0.68} & 76.79\sd{0.54} & 73.56\sd{0.46} \\

Re-weight    
& 72.92\sd{1.24} & 72.43\sd{1.35} & 55.52\sd{0.53} & 55.73\sd{0.52} & 69.72\sd{0.33} & 69.42\sd{0.33} 
& 84.24\sd{1.15} & 84.07\sd{1.13} & 81.12\sd{0.48} & 80.51\sd{0.79} & 76.12\sd{0.48} & 73.85\sd{0.40} \\

SMOTE        
& 73.24\sd{1.32} & 73.02\sd{1.32} & 54.03\sd{0.62} & 53.74\sd{0.63} & 70.82\sd{0.44} & 70.43\sd{0.33} 
& 84.12\sd{1.37} & 83.89\sd{1.43} & 81.01\sd{1.00} & 80.43\sd{1.03} & 78.07\sd{0.31} & 74.58\sd{0.32} \\

Embed-SMOTE  
& 72.32\sd{0.97} & 72.23\sd{0.85} & 56.73\sd{0.64} & 56.82\sd{0.63} & 69.23\sd{0.32} & 69.13\sd{0.32} 
& 79.54\sd{0.74} & 79.95\sd{0.78} & 80.06\sd{0.68} & 80.35\sd{0.69} & 75.03\sd{0.42} & 72.19\sd{0.42} \\

DR-GNN       
& 74.82\sd{1.12} & 74.52\sd{0.96} & 55.23\sd{1.28} & 55.83\sd{1.21} & 68.52\sd{1.05} & 68.14\sd{0.92} 
& 80.34\sd{0.92} & 80.48\sd{0.86} & 81.23\sd{0.78} & 80.87\sd{0.73} & 78.69\sd{0.28} & 75.78\sd{0.47} \\

TAM     
& 75.53\sd{1.26} & 75.52\sd{1.08} & 58.43\sd{1.62} & 58.32\sd{1.44} & 71.32\sd{1.18} & 71.52\sd{1.02} 
& 81.67\sd{0.88} & 81.52\sd{0.82} & 81.94\sd{0.69} & 81.36\sd{0.71} & 79.08\sd{0.30} & 76.45\sd{0.49} \\

GraphSMOTE   
& 74.32\sd{1.08} & 73.92\sd{0.93} & 57.52\sd{1.44} & 56.72\sd{1.26} & 68.23\sd{1.02} & 68.23\sd{0.91} 
& 82.21\sd{0.44} & 82.15\sd{0.35} & 82.62\sd{0.45} & 80.15\sd{0.57} & 78.56\sd{0.34} & 75.29\sd{0.35} \\

GraphENS     
& 75.82\sd{0.43} & 74.13\sd{0.32} & 59.52\sd{0.52} & 58.23\sd{0.43} & 72.23\sd{0.23} & 71.92\sd{0.23} 
& 83.58\sd{0.67} & 83.42\sd{0.61} & 82.45\sd{0.58} & 81.98\sd{0.64} & 79.27\sd{0.18} & 76.16\sd{0.43} \\

GraphMixup   
& 77.52\sd{1.21} & 77.42\sd{1.05} & 63.32\sd{1.63} & 63.12\sd{1.47} & 71.42\sd{1.15} & 71.72\sd{1.03} 
& 84.17\sd{1.14} & 84.06\sd{1.05} & 82.09\sd{0.64} & 81.59\sd{0.81} & \secondbest{80.45}\sd{0.26} & 76.59\sd{0.25} \\

GraphSHA     
& 78.35\sd{0.82} & 77.78\sd{0.67} & 63.40\sd{0.54} & 62.68\sd{0.52} & \secondbest{77.11}\sd{0.42} & 77.45\sd{0.77} 
& 84.52\sd{0.76} & 84.37\sd{0.69} & 82.78\sd{0.62} & \secondbest{82.14}\sd{0.58} & 79.84\sd{0.35} & 76.52\sd{0.38} \\

CDCGAN       
& 78.12\sd{0.42} & 77.92\sd{0.43} & 63.23\sd{0.42} & 63.23\sd{0.42} & 72.82\sd{0.42} & 74.42\sd{0.52} 
& 84.28\sd{0.58} & 84.15\sd{0.52} & 82.53\sd{0.65} & 81.87\sd{0.62} & 79.28\sd{0.40} & 76.40\sd{0.28} \\

IceBerg      
& \secondbest{80.99}\sd{0.83} & \secondbest{80.95}\sd{0.77} & \secondbest{63.27}\sd{1.05} & \secondbest{63.33}\sd{1.19} & 76.85\sd{1.12} & \secondbest{77.45}\sd{0.77} 
& \secondbest{85.36}\sd{0.63} & \secondbest{85.18}\sd{0.60} & \secondbest{83.64}\sd{1.40} & 81.54\sd{1.23} & 80.18\sd{0.52} & \secondbest{76.93}\sd{0.48} \\

\midrule
\textbf{GraphSB} 
& \best{81.90}\sd{0.72} & \best{81.78}\sd{0.63} & \best{68.42}\sd{0.52} & \best{67.82}\sd{0.42} & \best{80.82}\sd{1.13} & \best{80.72}\sd{0.82} 
& \best{88.63}\sd{0.45} & \best{88.55}\sd{0.46} & \best{85.87}\sd{0.36} & \best{84.87}\sd{0.34} & \best{82.46}\sd{0.37} & \best{79.40}\sd{0.33} \\
\bottomrule
\end{tabular*}
\end{table*}

We adopt the public splits from~\cite{13} for the citation networks. For Amazon-Photo, Amazon-Computers, and ogbn-arxiv, we employ a consistent partition strategy, randomly selecting 20, 25, and 55 nodes per class for training, validation, and testing, respectively. Following~\cite{zhao2021graphsmote}, we construct step-imbalanced training sets by designating the last $\lceil C/2 \rceil$ classes as minority ($\mathcal{C}_{\text{mi}}$); majority classes retain 20 nodes, while minority nodes are downsampled to $20 \times \rho$. For naturally imbalanced datasets, we follow dataset-specific protocols: Wiki-CS adopts the 1:1:2 ratio from~\cite{13}, and CoraFull follows the 10\%/40\%/50\% split strategy from~\cite{Iceberg}.

\subsubsection{Hyperparameters and Baselines}
\label{sec4.1.2}


For {\SB}, we set $p_\text{drop}=0.1$. 
For the rest, we follow the same hyperparameter settings as GraphMixup~\cite{13}.
We ran each set of experiments five times with different random seeds and reported the average results.

We adpot 11 baselines from both data-level and algorithm-level, including Vanilla~\cite{hamilton2017inductive}, Re-weight~\cite{yuan2012sampling+}, SMOTE~\cite{chawla2002smote}, Embed-SMOTE~\cite{ando2017deep}, DR-GNN~\cite{DR-GCN}, TAM~\cite{TAM}, GraphSMOTE~\cite{zhao2021graphsmote}, GraphENS~\cite{9}, GraphMixup~\cite{13}, GraphSHA~\cite{GraphSHA}, CDCGAN~\cite{liu2025cdcgan}, and IceBerg~\cite{Iceberg}.



\begin{table}[!b]
\centering
\scriptsize
\setlength{\tabcolsep}{2.5mm}
\caption{AUC under varying training imbalance ratio $\rho$.}
\vspace{-5pt}
\label{tab:imbalance}
\begin{tabular}{lcccccc}
\toprule
\multirow{2}{*}{Methods} & \multicolumn{5}{c}{Training Imbalance Ratio $\rho$} \\
\cmidrule(lr){2-6}
 & \textbf{0.1} & \textbf{0.3} & \textbf{0.5} & \textbf{0.7} & \textbf{0.9} \\
\midrule
Vanilla          & 0.843 & 0.907 & 0.912 & 0.923 & 0.925 \\
Re-weight        & 0.869 & 0.921 & 0.925 & 0.929 & 0.931 \\
SMOTE            & 0.839 & 0.917 & 0.925 & 0.930 & 0.932 \\
Embed-SMOTE      & 0.870 & 0.906 & 0.918 & 0.927 & 0.929 \\
DR-GNN           & 0.890 & 0.921 & 0.933 & 0.937 & 0.939 \\
TAM              & 0.895 & 0.925 & 0.937 & 0.941 & 0.943 \\
GraphSMOTE       & 0.887 & 0.923 & 0.931 & 0.935 & 0.937 \\
GraphENS         & 0.894 & 0.925 & 0.945 & 0.939 & 0.941 \\
GraphMixup       & 0.903 & 0.931 & 0.943 & 0.945 & 0.947 \\
GraphSHA         & 0.897 & 0.928 & 0.947 & 0.948 & 0.950 \\
CDCGAN           & 0.910 & 0.926 & 0.945 & 0.950 & 0.949 \\
IceBerg          & \secondbest{0.920} & \secondbest{0.935} & \secondbest{0.951} & \secondbest{0.955} & \secondbest{0.956} \\
\midrule
GraphSB          & \best{0.945} & \best{0.953} & \best{0.964} & \best{0.966} & \best{0.967} \\
\bottomrule
\end{tabular}
\end{table}

\subsection{Performance Comparison}
\label{sec5.2}


Table~\ref{tab:merged_results_subscript} presents comprehensive results on six benchmark datasets with Accuracy (Acc) and Macro-F1-score metrics. GraphSB consistently outperforms all baselines across all datasets. For instance, GraphSB achieves $80.82\%$ on PubMed and $88.63\%$ on Photo, surpassing the second-best method by $3.71\%$ and $3.27\%$, respectively. Moreover, the performance gain of GraphSB is also pronounced on naturally imbalanced datasets, demonstrating robustness across diverse imbalance scenarios. Additional AUC results and detailed convergence analysis are provided in Appendix~\ref{app:experiments}.

\subsection{Robustness to Training Imbalance Ratio}
\label{sec5.3}

To systematically evaluate the robustness of GraphSB under different levels of class imbalance, we conduct experiments on Cora by adjusting the training imbalance ratio $\rho \in \{0.1, 0.3, 0.5, 0.7, 0.9\}$. 
Results are presented in Table~\ref{tab:imbalance}, where GraphSB achieves consistently superior performance with AUC scores ranging from 0.945 to 0.967, outperforming IceBerg by 2.5\% under extreme imbalance ($\rho=0.1$) and maintaining the advantage across all imbalance ratios.

\subsection{Ablation study}
\label{sec5.4}
In this subsection, we evaluate the effectiveness of {\SE} (SE) and {\RD} (RD). 
Figure~\ref{Ablation study} depicts the outcomes across three datasets, where both modules contribute significantly to the performance.
It may be noticed that SE yields marginal improvements over the base model on Cora and Wiki-CS. 
This is primarily because these datasets are not severely imbalanced, where only a few additional edges are sufficient to enhance their structure. 
In contrast, SE demonstrates substantial gains on CoraFull, a network with severe imbalance.
This highlights SE's ability to adaptively compensate for the imbalanced graph structures.

\begin{figure}[!h]
\centering
\includegraphics[width=\linewidth]{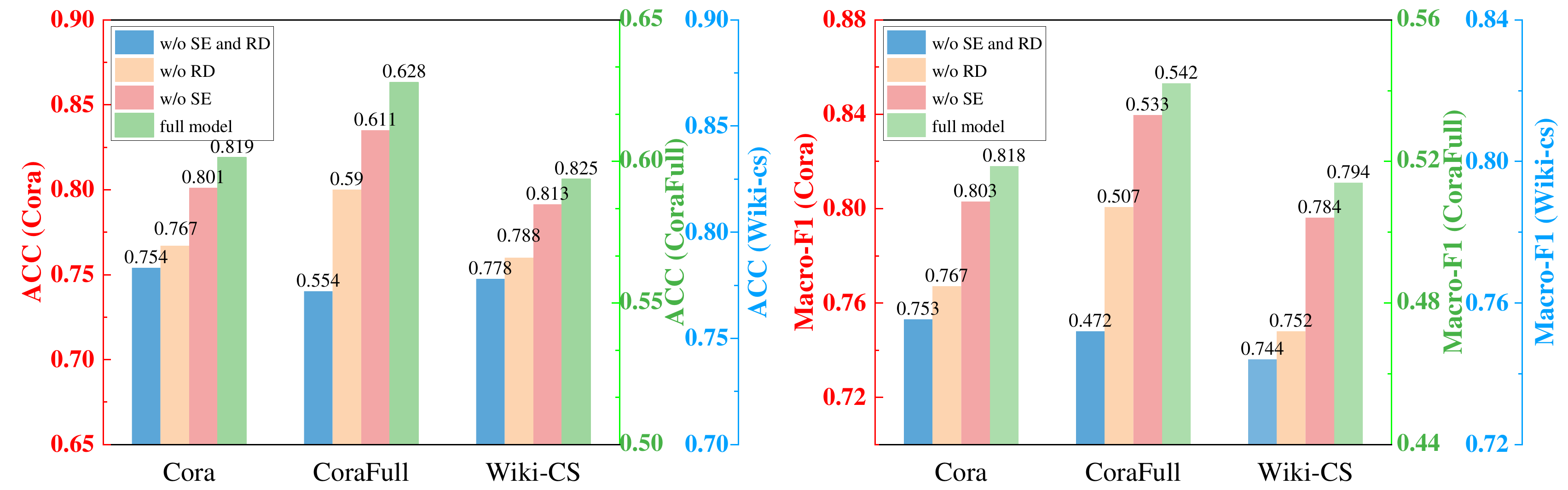}
\caption{Ablation study on SE and RD.}
\label{Ablation study}
\end{figure}




\subsection{Visualization Analysis}
\label{sec5.5}

Visualizations are conducted by projecting learned node embeddings into a 2-dimensional space via t-SNE~\cite{tsne} on PubMed. 
The perplexity is set to 30 with 5000 iterations for stability, as illustrated in Figure~\ref{fig4}.
Although Embed-SMOTE, GraphSMOTE, and GraphMixup demonstrate enhanced performance through node augmentations, the boundaries between different classes remain ambiguous. 
GraphENS enhances separability but yields dispersed minority nodes.
In contrast, GraphSB exhibits superior representation learning capability by successfully constructing three distinct and compact clusters,demonstrating its effectiveness in handling class imbalance.

\begin{figure}[!h]
\centering
\includegraphics[width=\linewidth]{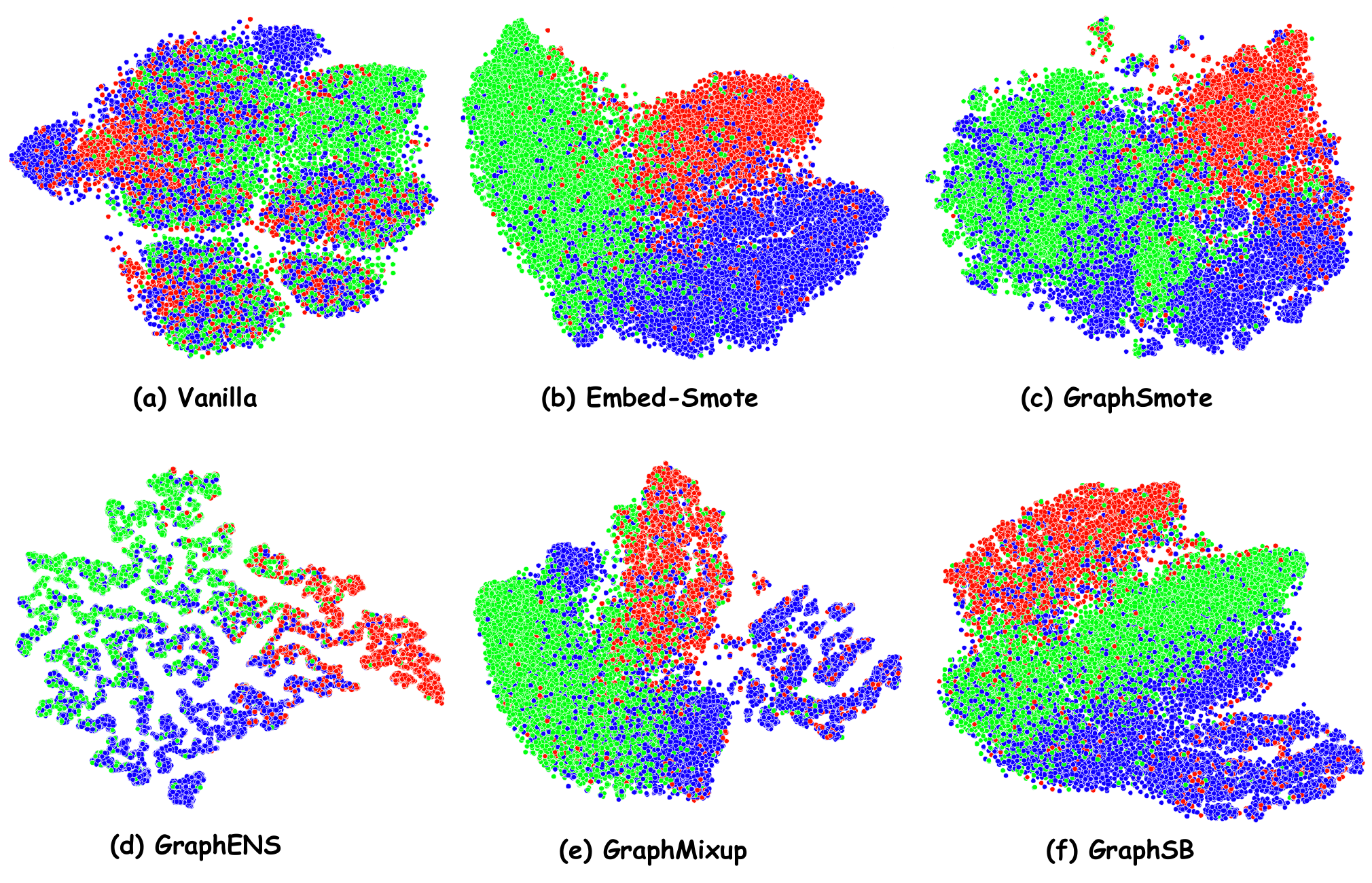}
\caption{Visualization on PubMed.}
\label{fig4}
\end{figure}


\subsection{Hyperparameter Analysis}
\label{sec5.6}
We analyze the impact of two key hyperparameters in {\RD}: the diffusion coefficient $\alpha$ and the diffusion step $K$. Figure~\ref{fig:parameter} presents the performance under different configurations.
As shown, the optimal performance is achieved at $K=10$ and $\alpha=0.15$. 
Increasing $K$ beyond 10 leads to performance degradation, indicating that excessive diffusion steps introduce noise. 
Similarly, both too small or too large $\alpha$ values compromise performance, as they either limit information propagation or cause over-smoothing.

\begin{figure}[!h]
\centering
\includegraphics[width=0.95\linewidth]{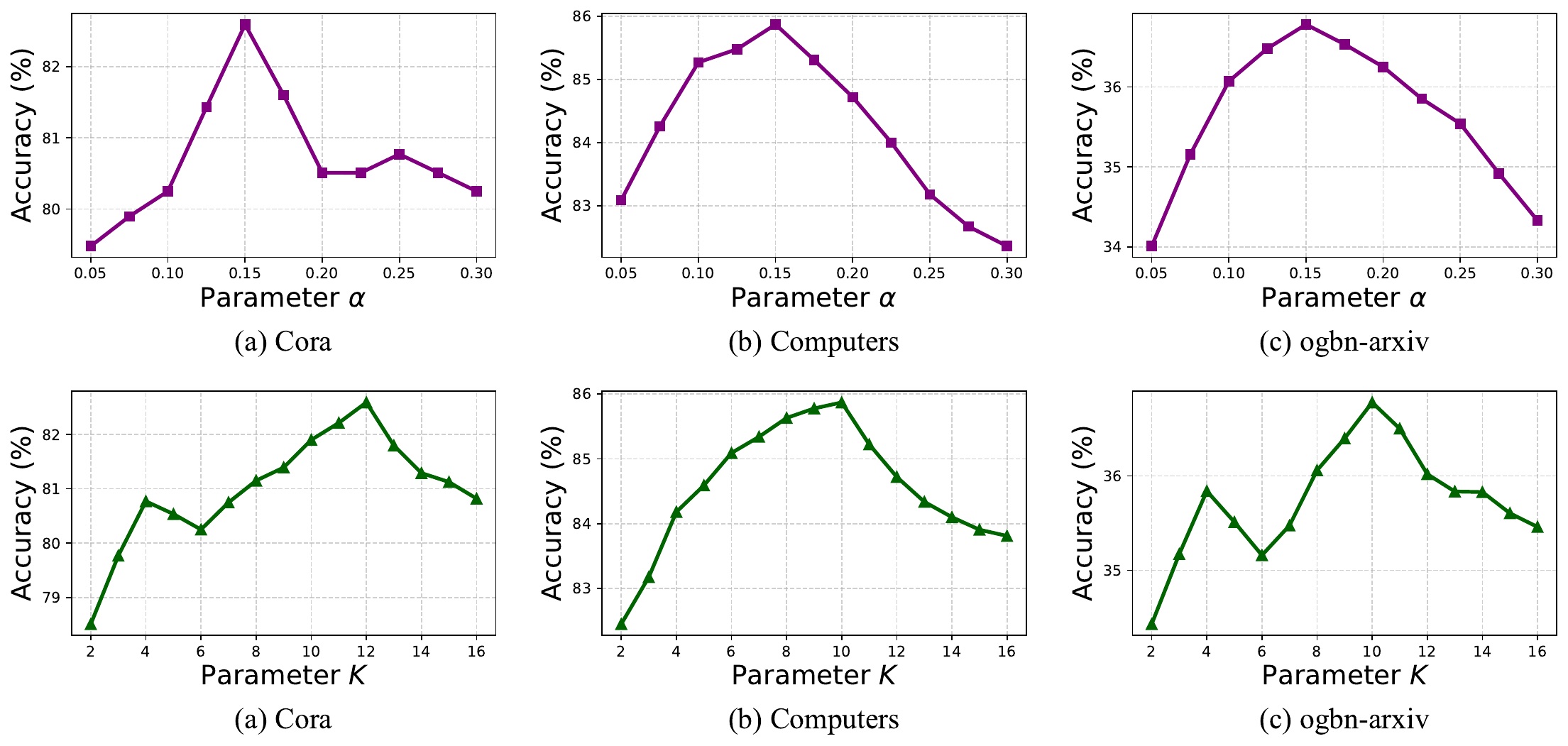}
\caption{Impact of diffusion steps $K$ and coefficient $\alpha$.}
\label{fig:parameter}
\end{figure}


\subsection{Universality of Structural Balance}
\label{sec5.7}

Table~\ref{tab:rb_results} compares performance (Acc) gains of four synthetic baselines: Embed-SMOTE (E-SM), GraphSMOTE (G-SM), GraphENS (ENS), and GraphMixup (MIX)---equipped with different plug-and-play modules (TAM, IceBerg, and SB) on three datasets.
Results show that SB consistently achieves the highest improvements, with average Acc gains of $+3.98\%$, $+6.59\%$, and $+3.15\%$ on Cora, Citeseer, and Computers respectively, substantially outperforming TAM and IceBerg.

\begin{table}[!h]
\centering
\tiny
\setlength{\tabcolsep}{0.5pt} 
\caption{Performance (Acc) gains of synthetic baselines equipped with various plug-and-play modules (TAM, IceBerg, and SB).}
\vspace{-5pt}
\label{tab:rb_results}
\begin{tabular*}{\linewidth}{@{\extracolsep{\fill}}cc|ccccc|ccccc@{}}
\toprule
\multicolumn{2}{c|}{Metric} & \multicolumn{5}{c|}{\textbf{Acc} ($\uparrow$)} & \multicolumn{5}{c}{\textbf{Macro-F1} ($\uparrow$)} \\
\midrule
\multicolumn{2}{c|}{Method} & E-SM & G-SM & ENS & MIX & Avg($\Delta$) & E-SM & G-SM & ENS & MIX & Avg($\Delta$) \\
\midrule
\multirow{4}{*}{\rotatebox{90}{Cora}} 
& BASE      & 72.32 & 74.32 & 75.81 & 77.52 & 75.00 & 72.22 & 73.92 & 74.16 & 77.44 & 74.44 \\
& +TAM      & 73.15 & 75.42 & 76.33 & 78.12 & 75.76 \tiny{(+0.76)} & 72.95 & 74.85 & 75.06 & 78.21 & 75.27 \tiny{(+0.83)} \\
& +IceBerg       & 73.52 & 78.86 & 77.92 & 80.24 & 77.64 \tiny{(+2.64)} & 73.28 & 78.15 & 76.85 & 80.12 & 77.10 \tiny{(+2.66)} \\
& +SB       & 74.34 & 81.30 & 78.33 & 81.93 & 78.98 \tiny{(+3.98)} & 74.14 & 80.27 & 77.18 & 81.78 & 78.34 \tiny{(+3.90)} \\
\midrule
\multirow{4}{*}{\rotatebox{90}{Citeseer}} 
& BASE      & 56.74 & 57.55 & 59.55 & 63.31 & 59.29 & 56.81 & 56.71 & 58.21 & 63.12 & 58.71 \\
& +TAM      & 57.63 & 58.79 & 60.18 & 64.11 & 60.18 \tiny{(+0.89)} & 57.09 & 57.51 & 59.31 & 63.92 & 59.46 \tiny{(+0.75)} \\
& +IceBerg       & 63.01 & 62.06 & 61.88 & 66.57 & 63.38 \tiny{(+4.09)} & 62.75 & 61.91 & 61.40 & 67.01 & 63.27 \tiny{(+4.56)} \\
& +SB       & 67.86 & 64.24 & 62.84 & 68.58 & 65.88 \tiny{(+6.59)} & 67.77 & 63.15 & 61.53 & 69.14 & 65.40 \tiny{(+6.69)} \\
\midrule
\multirow{4}{*}{\rotatebox{90}{Computers}} 
& BASE      & 80.06 & 82.62 & 82.45 & 82.09 & 81.81 & 80.35 & 80.15 & 81.98 & 81.59 & 81.02 \\
& +TAM      & 80.58 & 82.94 & 82.73 & 82.45 & 82.18 \tiny{(+0.37)} & 80.67 & 80.48 & 82.21 & 81.86 & 81.31 \tiny{(+0.29)} \\
& +IceBerg       & 82.42 & 83.28 & 83.15 & 83.67 & 83.13 \tiny{(+1.32)} & 81.85 & 81.42 & 82.95 & 82.78 & 82.25 \tiny{(+1.23)} \\
& +SB       & 84.26 & 85.14 & 84.58 & 85.87 & 84.96 \tiny{(+3.15)} & 83.52 & 83.89 & 84.12 & 84.87 & 84.10 \tiny{(+3.08)} \\
\bottomrule
\end{tabular*}
\end{table}

\subsection{Performance on Large-scale Graphs}
\label{sec5.8}

Table~\ref{tab:large_scale} reports the scalability evaluation on ogbn-arxiv (169K nodes, $\rho_0{=}0.001$) and CoraFull (20K nodes, $\rho_0{=}0.016$). 
Notably, SB exhibits an asymmetric enhancement pattern: classical methods (Vanilla, Re-weight (RW), and Embed-SMOTE) achieve substantial absolute gains ($+14.58\%$ on ogbn-arxiv, $+8.12\%$ on CoraFull on average), while recent sophisticated baselines (GraphENS, GraphMixup) show relatively marginal improvements. 
This disparity induces a performance convergence phenomenon---diverse baselines, once equipped with SB, attain a comparable accuracy plateau, suggesting that structural balance rectification constitutes the predominant bottleneck in extreme imbalance scenarios.

\begin{table}[!h]
\centering
\scriptsize
\setlength{\tabcolsep}{2.5mm}
\caption{Performance on large-scale graphs (Acc).}
\vspace{-5pt}
\label{tab:large_scale}
\begin{tabular}{cc|ccccc}
\toprule
\multicolumn{2}{c|}{Metric} & \multicolumn{5}{c}{\textbf{Acc} ($\uparrow$)} \\
\midrule
\multicolumn{2}{c|}{Method} & Vanilla & RW & E-SM & ENS & MIX \\
\midrule
\multirow{4}{*}{\rotatebox{90}{ogbn-arxiv}} 
& BASE      & 21.85 & 22.69 & 21.38 & OOM & OOM \\
& +TAM      & 25.28 & 27.54 & 25.12 & OOM & OOM \\
& +IceBerg      & 32.13 & 32.97 & 21.86 & OOM & OOM \\
& +SB       & 36.43 & 36.06 & 36.07 & OOM & OOM \\
\midrule
\multirow{4}{*}{\rotatebox{90}{CoraFull}} 
& BASE      & 54.89 & 55.38 & 56.02 & 55.21 & 53.99 \\
& +TAM      & 54.93 & 50.01 & 55.47 & 55.79 & 55.37 \\
& +IceBerg  & 56.49 & 59.79 & 58.96 & 59.03 & 58.83 \\
& +SB       & 63.84 & 63.76 & 63.71 & 60.83 & 62.67 \\
\bottomrule
\end{tabular}
\end{table}



\vspace{-8pt}
\section{Conclusion}
\label{sec6}

This paper proposes GraphSB, a novel framework designed to tackle imbalanced node classification. 
The core of GraphSB is {\SB} (SB), which explicitly addresses the inherent imbalanced graph structure, a root cause that disadvantages minority-class nodes as revealed by our theoretical analysis.
SB performs a two-stage structure optimization process: {\SE} builds similarity-based edges to strengthen minority-class connectivity, and {\RD} propagates minority context while capturing higher-order structural dependencies. 
Experiments show that GraphSB consistently outperforms state-of-the-art baselines. 
Furthermore, SB also offers plug-and-play compatibility that remarkably improves existing methods.

\bibliographystyle{named}
\bibliography{ijcai26}

\newpage
\clearpage

\appendix

\section{Proofs}
\label{app:proofs}
\theoremone*
\begin{proof}
The class-conditional expectations for message propagation are characterized as follows: (1) Majority-class paths ($V_\text{ma}$): $\mathbb{E}\left[\frac{\|W^{(L)}\|}{|V_\text{ma}|}\right] = \frac{\|W^{(L)}\|}{\gamma\mathbb{E}[d_\text{mi}]}$, and (2) Minority-class paths ($V_\text{mi}$): $\mathbb{E}\left[\frac{\|W^{(L)}\|}{|V_\text{mi}|}\right] = \frac{\|W^{(L)}\|}{\mathbb{E}[d_\text{mi}]}$.
The probability of randomly selecting a node from each class is:
\begin{equation}
\begin{aligned}
p_1 = P(v \in V_\text{mi}) = \frac{1}{\beta+1}, p_2= P(v \in V_\text{ma}) = \frac{\beta}{\beta+1}.
\end{aligned}
\end{equation}

For a path of length $L$, let $k_p$ represent the number of steps passing through majority-class nodes. 
By applying the binomial theorem, the path weight expectation is:
\begin{equation}
\begin{aligned}
\mathcal{W}_L &= \frac{W^{(L)}}{\mathbb{E}[d]^L} \sum_{k_p=0}^{L} \binom{L}{k_p} \left(\frac{p_2}{\gamma}\right)^{k_p} p_1^{L-k_p}\\
&= \frac{W^{(L)}}{\mathbb{E}[d_\text{mi}]^L} \sum_{k_p=0}^L \binom{L}{k_p} \left(\frac{\beta}{\gamma(\beta+1)}\right)^{k_p} \left(\frac{1}{\beta+1}\right)^{L-k_p}\\
&= \frac{W^{(L)}}{\mathbb{E}[d_\text{mi}]^L} \left(\frac{\beta + \gamma}{\gamma(\beta+1)}\right)^L.
\end{aligned}
\end{equation}
Thus concludes the proof.
\end{proof}

\theoremtwo*
\begin{proof}
The expected number of nodes at the $\ell$-th level is:
\begin{equation}\label{eq7}
\begin{aligned}
\mathbb{E}[N^{(\ell)}] &= \mathbb{E}[d_\text{mi}]^{\ell}\sum_{k_p=0}^{\ell} \binom{\ell}{k_p} \left({p_2}{\gamma}\right)^{k_p} p_1^{\ell-k_p}\\
&= \mathbb{E}[d_\text{mi}]^\ell \cdot \left(\frac{1 + \beta\gamma}{\beta+1}\right)^\ell.
\end{aligned}
\end{equation}

Class contributions can be expressed as: (1) Minority: $\mathcal{W}_L \times \frac{1}{\beta+1} \times \mathbb{E}[N^{(\ell)}]$ and (2) Majority: $\mathcal{W}_L \times \frac{\beta}{\beta+1} \times \mathbb{E}[N^{(\ell)}]$.
Combining Eq.(\ref{eq7}), we can obtain:

\begin{align}
\frac{\mathbb{E}[\|\nabla_\theta L(u)\|]}{\mathbb{E}[\|\nabla_\theta L(v)\|]} &\propto \frac{\text{Minority }}{\text{Majority }} 
= \frac{1}{\beta}.
\end{align}

Thus concludes the proof.
\end{proof}

\theoremthree*
\begin{proof}
In GNNs, a node's embedding at the $\ell$-th layer is:
\begin{equation}
h_u^{(\ell)} = W^{(\ell)} \sum_{v \in \mathcal{N}(u)} \frac{h_v^{(\ell-1)}}{\sqrt{d_u d_v}},
\end{equation}
where $\mathcal{N}(u)$ denotes the neighbors of a node $u$, $d_v$ is the degree of node $v$, and $W^{(\ell)}$ is the layer-specific weight matrix.

Then, the minority class centroids are given by
\begin{equation}
\mathbb{E}[h_u^{(\ell)}] = W^{(\ell)} \left( \frac{p}{p+q\beta} \mu_\text{mi} + \frac{q\beta}{\sqrt{(p+q\beta)(q+p\beta)}} \mu_\text{ma} \right).
\end{equation}

When overestimating the minority-class influence (via $q+p\beta>\sqrt{(p+q\beta)(q+p\beta)}$), the expected update for $u \in V_\text{mi}$ is:
\begin{equation}
\mathbb{E}[h_u^{(\ell)}] \approx W^{(\ell)} \left( \frac{p}{p + q \beta} \mu_\text{mi}^{(\ell-1)} + \frac{q \beta}{q + p \beta} \mu_\text{ma}^{(\ell-1)} \right).
\end{equation}

Similarly, for $v \in V_\text{ma}$ :
\begin{equation}
\mathbb{E}[h_v^{(\ell)}] \approx W^{(\ell)} \left( \frac{q}{p + q \beta} \mu_\text{mi}^{(\ell-1)} + \frac{p \beta}{q + p \beta} \mu_\text{ma}^{(\ell-1)} \right).
\end{equation}

Let $\mathbf{z}^{(\ell)} = [\mu_\text{mi}^{(\ell)}, \mu_\text{ma}^{(\ell)}]^\top$ denote the state vector, the GNN's message passing induces:
\begin{equation}
\mathbf{z}^{(\ell)} = M W \mathbf{z}^{(\ell-1)}, \quad 
M = \begin{bmatrix}
\frac{p}{p + q\beta} & \frac{q\beta}{q + p\beta} \\
\frac{q}{p + q\beta} & \frac{p\beta}{q + p\beta}
\end{bmatrix}.
\end{equation}

After $\ell$ layers, it becomes: 
\begin{equation}
\mathbf{z}^{(\ell)} = (WM)^\ell \mathbf{z}^{(0)}
\end{equation}

The convergence rate (the spectral radius of $WM$) is governed by $\sigma_{\max}(W)\lambda_2(M)$, where $\lambda_2(M)$ is the subdominant eigenvalue of $M$. 
Then, the centroid distance $\Delta^{(\ell)} = \mu_\text{mi}^{(\ell)} - \mu_\text{ma}^{(\ell)}$ decays as:
\begin{equation}
\|\Delta^{(\ell)}\| \leq C \cdot \underbrace{(\sigma_{\max}(W)\lambda_2(M))^\ell}_{\text{exponential decay}} \|\Delta^{(0)}\|,
\end{equation}
where the condition $\sigma_{\max}(W)\lambda_2(M)<1$ ensures progressive assimilation of node features of minority-class. 

Thus concludes the proof.
\end{proof}

\newpage
\section{Additional Experiments}
\label{app:experiments}

\begin{table*}[!t]
\centering
\footnotesize
\setlength{\tabcolsep}{3pt}
\caption{Performance comparison on six datasets (AUC).}
\label{tab:merged_auc}
\begin{tabular*}{\textwidth}{@{\extracolsep{\fill}}lcccccc@{}}
\toprule
\multirow{2}{*}{Methods} & \multicolumn{6}{c}{\textbf{AUC}} \\
\cmidrule(lr){2-7}
 & Cora & Citeseer & PubMed & Photo & Computers & Wiki-CS \\
\midrule
Vanilla      & 91.23$\pm$0.52 & 81.73$\pm$0.42 & 85.42$\pm$0.73 & 94.15$\pm$0.64 & 94.82$\pm$0.58 & 94.05$\pm$0.35 \\
Re-weight    & 92.63$\pm$0.32 & 82.84$\pm$0.32 & 87.04$\pm$0.05 & 96.18$\pm$0.50 & 96.48$\pm$0.23 & 93.98$\pm$0.49 \\
SMOTE        & 92.53$\pm$0.44 & 82.92$\pm$0.33 & 87.73$\pm$0.22 & 95.85$\pm$0.46 & 96.45$\pm$0.25 & 94.54$\pm$0.32 \\
Embed-SMOTE  & 91.82$\pm$0.54 & 83.04$\pm$0.42 & 86.32$\pm$0.24 & 93.99$\pm$0.15 & 96.16$\pm$0.14 & 94.32$\pm$0.33 \\
DR-GNN       & 93.32$\pm$0.63 & 83.23$\pm$0.72 & 86.12$\pm$0.57 & 94.52$\pm$0.47 & 95.28$\pm$0.52 & 95.03$\pm$0.19 \\
TAM          & 93.74$\pm$0.51 & 87.24$\pm$0.79 & 87.42$\pm$0.61 & 95.03$\pm$0.41 & 95.74$\pm$0.38 & 95.26$\pm$0.22 \\
GraphSMOTE   & 93.12$\pm$0.62 & 86.73$\pm$0.71 & 86.03$\pm$0.55 & 94.66$\pm$0.24 & 95.59$\pm$0.38 & 95.54$\pm$0.24 \\
GraphENS     & 94.52$\pm$0.32 & 87.53$\pm$0.42 & 87.82$\pm$0.12 & 95.72$\pm$0.35 & 96.18$\pm$0.29 & 95.38$\pm$0.37 \\
GraphMixup   & 94.32$\pm$0.69 & 88.42$\pm$0.52 & 86.62$\pm$0.59 & 95.86$\pm$0.51 & 96.81$\pm$0.08 & \secondbest{96.47$\pm$0.37} \\
GraphSHA     & 94.72$\pm$0.32 & 88.35$\pm$0.32 & 89.67$\pm$1.05 & 96.24$\pm$0.38 & 96.53$\pm$0.31 & 95.93$\pm$0.28 \\
CDCGAN       & 94.52$\pm$0.42 & 84.23$\pm$0.42 & 88.52$\pm$0.12 & 96.08$\pm$0.44 & 96.42$\pm$0.36 & 95.70$\pm$0.33 \\
IceBerg      & \secondbest{95.13$\pm$0.61} & \secondbest{85.37$\pm$0.69} & \secondbest{89.67$\pm$1.05} & \secondbest{97.50$\pm$0.23} & \secondbest{97.24$\pm$0.43} & 96.35$\pm$0.41 \\
\midrule
\textbf{GraphSB} & \best{96.43$\pm$0.42} & \best{92.82$\pm$0.42} & \best{93.72$\pm$0.52} & \best{98.18$\pm$0.21} & \best{98.05$\pm$0.08} & \best{97.74$\pm$0.08} \\
\bottomrule
\end{tabular*}
\end{table*}

\subsection{Convergence Analysis}
\label{app:convergence}
To further evaluate the training stability and efficiency of GraphSB, we record the test Accuracy and Macro-F1 scores on six benchmark datasets throughout the training process.
The convergence curves with a fixed random seed of 100 are illustrated in Figure~\ref{fig:history}.
It can be observed that GraphSB achieves rapid convergence and maintains stable performance across all datasets.

\begin{figure}[h]
\centering
\includegraphics[width=\linewidth]{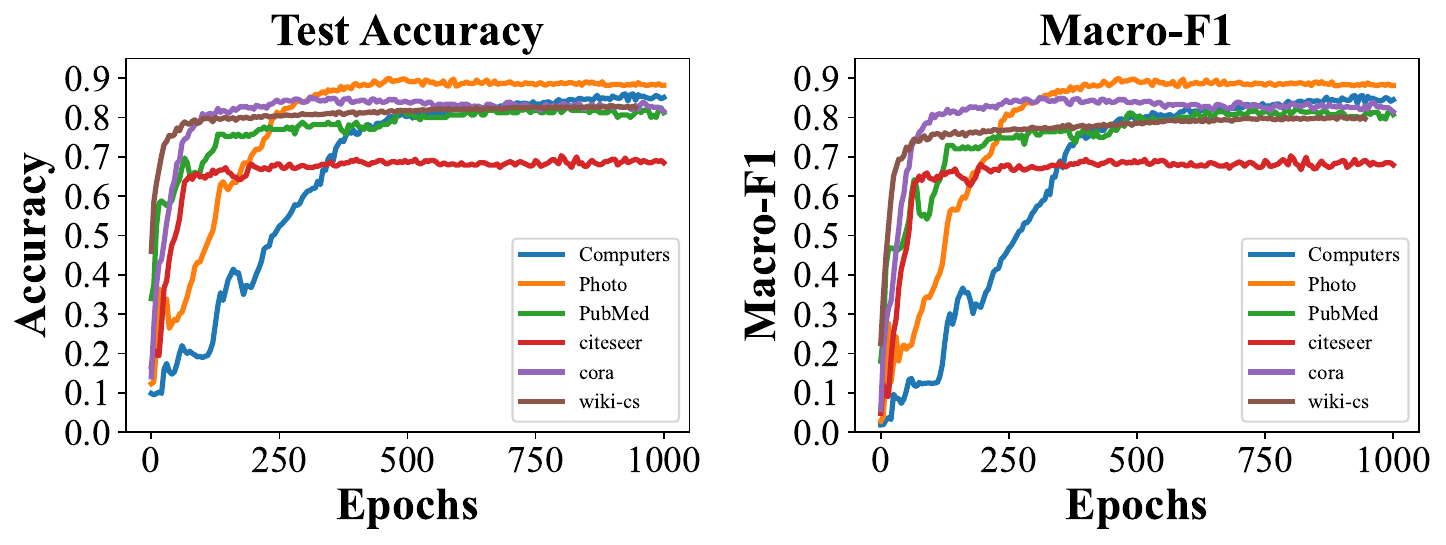}
\caption{Convergence curves of Acc and Macro-F1 on six datasets (Seed=100).}
\label{fig:history}
\end{figure}

\subsection{Universality of Structural Balance}

\subsubsection{Visualization}

Table~\ref{tab:rb_results}  presents the performance of various methods with and without the proposed {\SB} (SB) module.
To further illustrate the effectiveness of integrating SB into existing state-of-the-art approaches, we also provide a corresponding visualization on the Cora dataset.
As shown in Figure~\ref{fig5}, incorporating the SB module leads to clearer class boundaries, indicating a more discriminative and well-structured embedding space compared to the original methods.


\begin{figure}[!h]
\centering
\includegraphics[width=\linewidth]{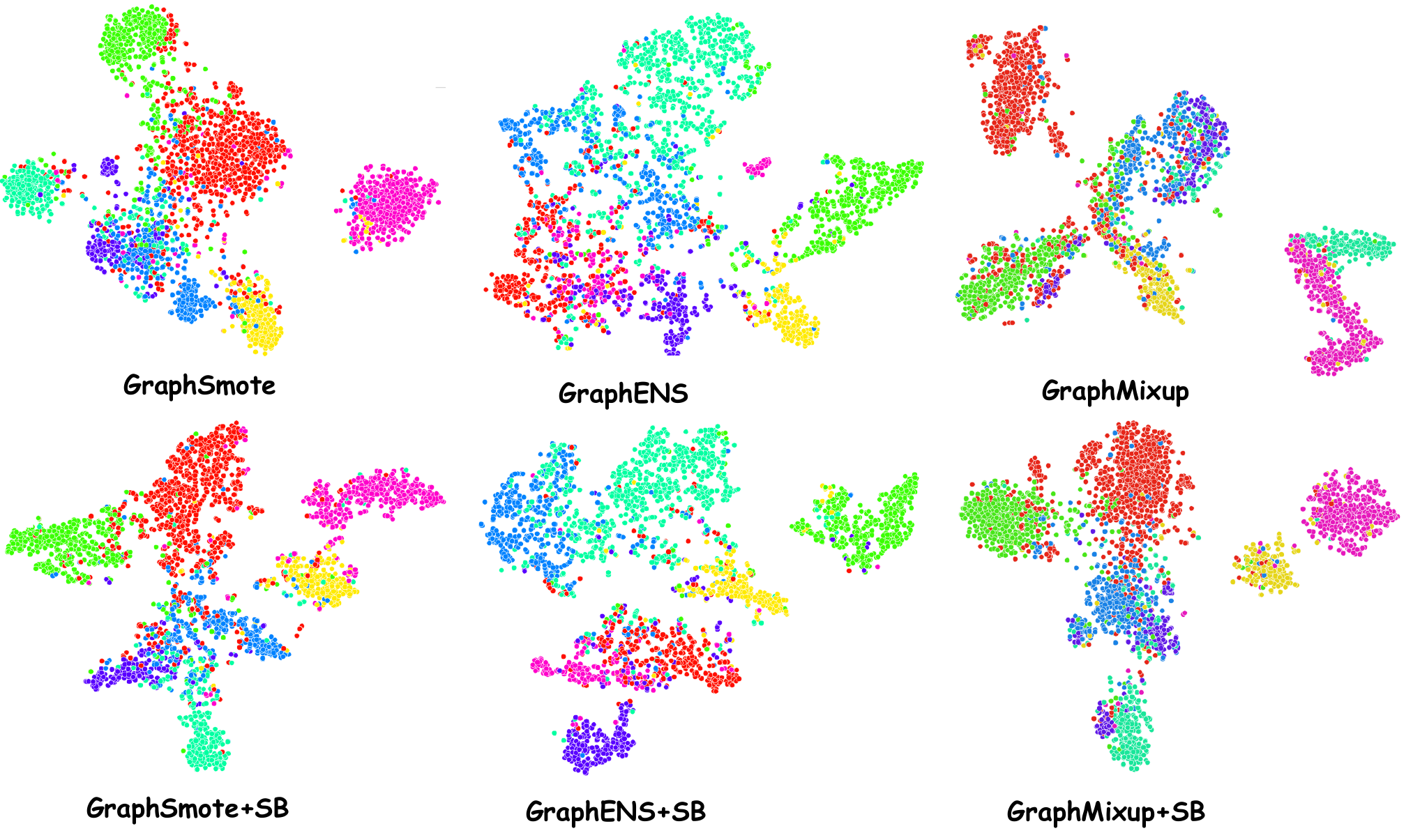}
\caption{Visualization of GraphSmote, GraphENS, GraphMixup with and without SB on Cora.}
\label{fig5}
\end{figure}

\subsubsection{Intra-to-inter-class Ratio}
The core mechanism of {\SB} (SB) involves initially concentrating attention on the target node, which subsequently diffuses to its neighboring nodes over multiple steps. 
This iterative diffusion process yields a continuous weighted diffusion matrix that defines the edge weights between the target node and other nodes. 
By prioritizing relational balance over numerical balance, the SB module enhances intra-class connectivity for minority-class nodes while mitigating cross-class interference from majority-class nodes. 

To evaluate the quality of the learned graph embeddings, we introduce the intra-to-inter-class distance ratio as a novel metric to quantify both intra-class consistency and inter-class separability. 
For a category \( C_k \) containing \( N_k \) samples, the intra-class distance is defined as:  
\[
D_{\text{intra}}(C_k) = \frac{1}{N_k (N_k - 1)} \sum_{i \neq j} d(x_i, x_j),
\]  
where \( d(x_i, x_j) \) represents the Euclidean distance between the embeddings of nodes \( x_i \) and \( x_j \) within the same category.

In contrast, the inter-class distance between two distinct categories \( C_i \) and \( C_j \) is defined as:  
\[
D_{\text{inter}}(C_i, C_j) = \frac{1}{N_i N_j} \sum_{x_i \in C_i} \sum_{x_j \in C_j} d(x_i, x_j).
\]  

Finaly, we define the intra-to-inter-class distance ratio as:
\begin{equation}
R = \frac{\frac{1}{|C|} \sum_{k} D_{\text{inter}}(C_k, C_{\bar{k}})}{\frac{1}{|C|} \sum_{k} D_{\text{intra}}(C_k)},
\label{eq:ratio}
\end{equation}
where \( C_{\bar{k}} \) represents the set of all categories except \( C_k \). A higher value of \( R \) indicates that the embeddings exhibit greater inter-class separation relative to intra-class compactness, which is favorable for improving downstream classification performance.  

Figure~\ref{fig2} exhibits the intra-to-inter-class distance ratio for various methods, both with and without the proposed SB module.
Evidently, the SB module consistently enhances this ratio across all methods, with the most significant improvement observed on GraphSMOTE (+0.42).
This gain can be attributed to the SB module's ability to mitigate the impact of inherent graph imbalance, thereby promoting better inter-class separation while preserving intra-class compactness.

\begin{figure}[!h]
\centering
\includegraphics[width=0.85\linewidth]{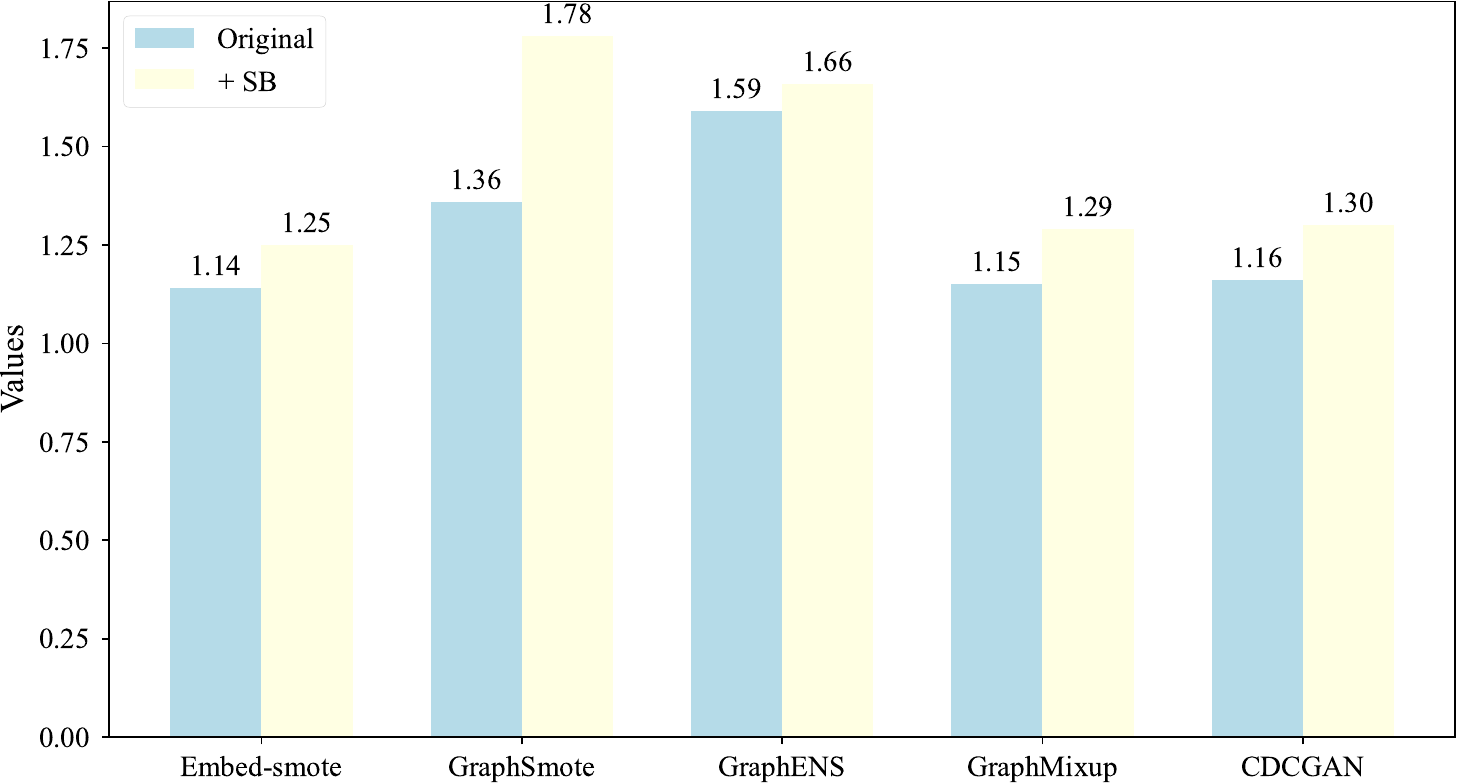}
\caption{Impact of SB on the intra-to-inter-class distance ratio.}
\label{fig2}
\end{figure}

\subsection{Over-Sampling Scale}

In the Quantity Balance step, as described in Figure~\ref{framework}, we employ a reinforcement learning module that adaptively addresses the issue of over-sampling scale. 
That is to say, for different datasets, GraphSB automatically determines appropriate over-sampling scales for each minority class.
Starting from an initial ratio $\pi_{\text{init}}^i$ for each dataset $i$, the final over-sampling scale is learned through $\Delta\pi = \pi^i-\pi_{\text{init}}^i$. 
Moreover, even on the same dataset, GraphSB can generate different numbers of supplementary nodes under different base architectures, as the over-sampling scale is learned adaptively. Figure~\ref{fig:rl_adaptive} demonstrates the over-sampling ratio adjustment process across four datasets.

\begin{figure}[!h]
\centering
\includegraphics[width=0.85\linewidth]{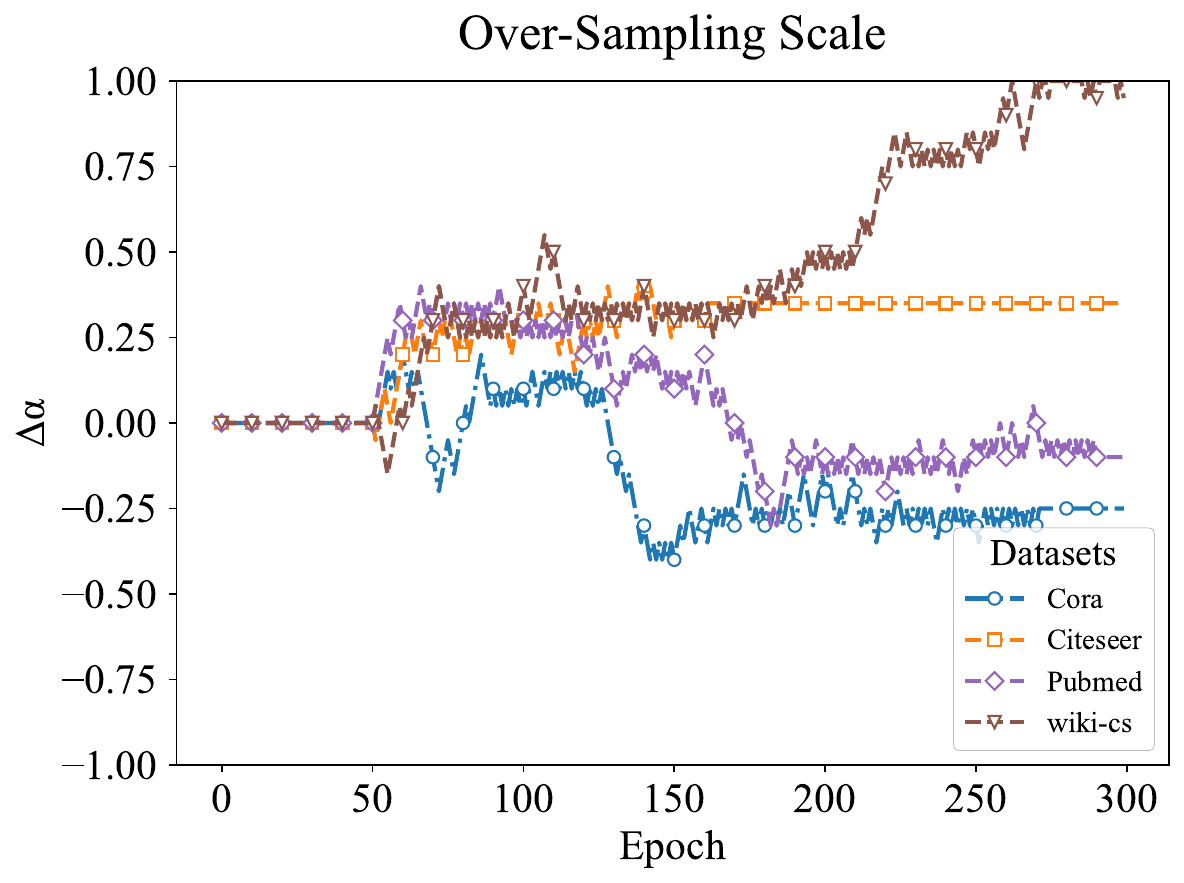}
\caption{Changes in $\Delta\pi$ across four datasets.}
\label{fig:rl_adaptive}
\end{figure}

Clearly, each dataset benefits from a different over-sampling scale, which supports our claim that the reinforcement learning module can adaptively determine the appropriate scale based on datasets.

\subsection{Robustness to Training Imbalance Ratios}

\begin{figure}[!h]
\centering
\includegraphics[width=\linewidth]{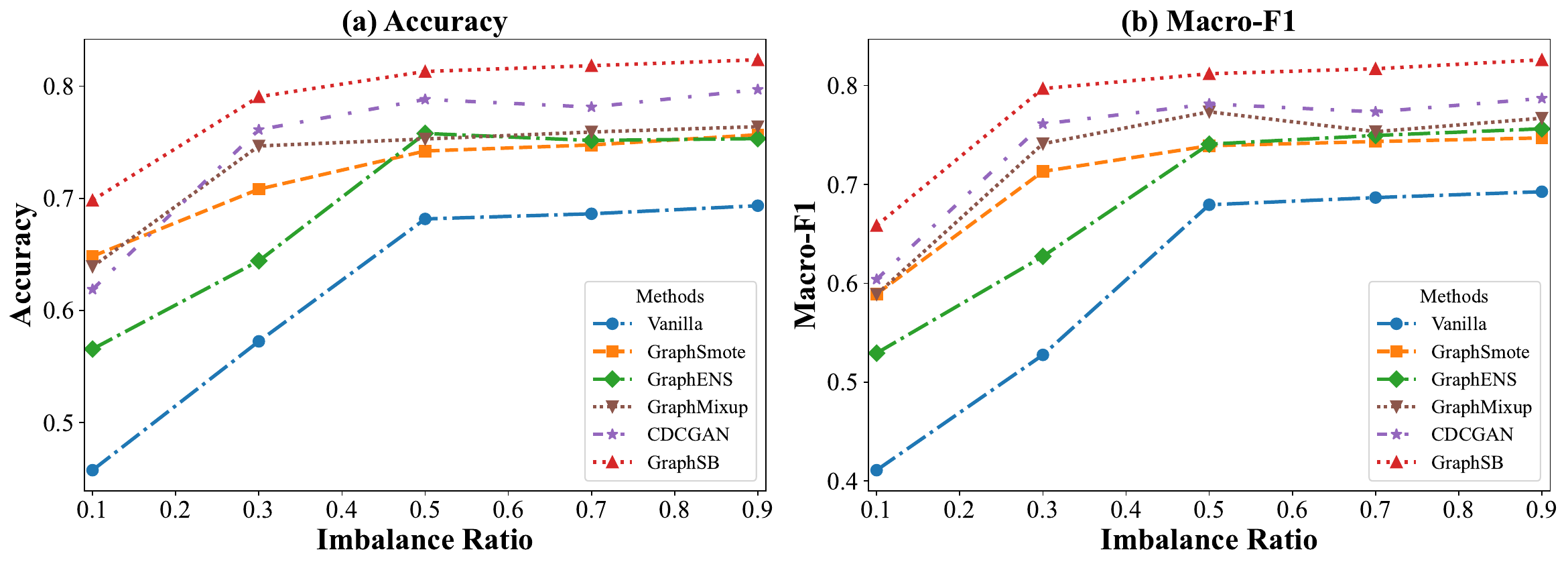}
\caption{Accuracy and Macro-F1 results under varying imbalance ratios.}
\label{fig:imbalance ratios}
\end{figure}

Table~\ref{tab:imbalance} exhibits the results of AUC under varying imbalance ratios on Cora. 
To systematically evaluate the robustness of GraphSB under different levels of class imbalance, we also report Accuracy and Macro-F1-score results with different training imbalance ratios $\rho \in \{0.1, 0.3, 0.5, 0.7, 0.9\}$. 
As shown in Figure~\ref{fig:imbalance ratios}, GraphSB demonstrates strong performance in terms of both Accuracy and Macro-F1-score across different imbalance ratios
The results further validate our model's effectiveness and robustness, particularly in handling minority classes. 
More impressively, under extreme imbalance conditions ($\rho=0.1$), GraphSB achieves significant improvements in both metrics, outperforming the second-best method by 8\% in Accuracy and 5.4\% in Macro-F1-score.

\section{Complexity Analysis}
\label{app:complexity}
The proposed GraphSB maintains a linear computational complexity comparable to standard GCNs. 
Let $|\mathcal{V}|$, $|E|$, and $d$ denote the number of nodes, edges, and feature dimensions, respectively. 
For the {\SE} (SE) module, the neighbor consensus aggregation dominates with $\mathcal{O}(|E|d)$, while anchor alignment adds negligible overhead by targeting only a subset of borderline nodes. 
Furthermore, the {\RD} (RD) module replaces computationally expensive matrix inversion ($\mathcal{O}(|\mathcal{V}|^3)$) with \textbf{sparse iterative propagation}, which scales linearly as $\mathcal{O}(K|E_{aug}|d)$, where $K$ is the diffusion step. 
Consequently, the overall complexity is $\mathcal{O}(|E|d)$, ensuring scalability to large-scale graphs.

\end{document}